\newcommand{\diag}{\textup{diag}}
\newcommand{\KL}{\textup{KL}}
\newcommand{\Tr}{\textup{Tr}}
\newcommand{\argmax}{\textup{argmax}}
\newcommand{\argmin}{\textup{argmin}}
\begin{document}

\title{\textsc{Bayesian Variable Selection \\ for  Globally Sparse Probabilistic PCA}}

%\author{\name Pierre-Alexandre Mattei \email \emph{pierre-alexandre.mattei@parisdescartes.fr} \\
%\name Charles Bouveyron \email \emph{charles.bouveyron@parisdescartes.fr} \\  
 %      \addr Laboratoire MAP5, UMR CNRS 8145\\
  %     Universit\'e Paris Descartes\\
   %    45 rue des Saints-P\`eres, 75006 Paris, France
  %     \AND
   %    \name Pierre Latouche \email \emph{pierre.latouche@univ-paris1.fr} \\
   %    \addr Laboratoire SAMM, EA 4543\\
    %   Universit\'e Paris 1 Panth\'eon-Sorbonne\\
     %  90, rue de Tolbiac, 75013 Paris, France}

 \author{\name Charles Bouveyron \email \emph{charles.bouveyron@parisdescartes.fr} \\  
     \addr Laboratoire MAP5, UMR CNRS 8145\\
      Universit\'e Paris Descartes \& Sorbonne Paris Cité\\
          \name Pierre Latouche \email \emph{pierre.latouche@univ-paris1.fr} \\
    \addr Laboratoire SAMM, EA 4543\\
     Universit\'e Paris 1 Panth\'eon-Sorbonne\\
   \name Pierre-Alexandre Mattei \email \emph{pierre-alexandre.mattei@parisdescartes.fr} \\
           \addr Laboratoire MAP5, UMR CNRS 8145\\
  Universit\'e Paris Descartes \& Sorbonne Paris Cité}

\maketitle

\vskip 0.2in

\begin{abstract}
Sparse versions of principal component analysis (PCA) have imposed themselves as simple, yet powerful ways of selecting relevant features of high-dimensional data in an unsupervised manner. However, when several sparse principal components are computed, the interpretation of the selected variables is difficult since each axis has its own sparsity pattern and has to be interpreted separately. To overcome this drawback, we propose a Bayesian procedure called globally sparse probabilistic PCA (GSPPCA) that allows to obtain several sparse components with the same sparsity pattern. This allows the practitioner to identify the original variables which are relevant to describe the data. To this end, using Roweis' probabilistic interpretation of PCA and a Gaussian prior on the loading matrix, we provide the first exact computation of the marginal likelihood of a Bayesian PCA model. To avoid the drawbacks of discrete model selection, a simple relaxation of this framework is presented. It allows to find a path of models using a variational expectation-maximization algorithm. The exact marginal likelihood is then maximized over this path. This approach is illustrated on real and synthetic data sets. In particular, using unlabeled microarray data, GSPPCA infers much more relevant gene subsets than traditional sparse PCA algorithms.
\end{abstract}

\begin{keywords}
  High-dimensional data, Marginal likelihood, Model selection, Principal components, Variational inference.
\end{keywords}

\section{Introduction}
From the children test results of the seminal paper of~\cite{hotelling} to the challenging analysis of microarray data~\citep{ringner2008} {and the recent successes of deep learning \citep{pcanet}}, principal component analysis (PCA) has become one of the most popular tools for data-preprocessing and dimension-reduction. The original procedure consists in projecting the data onto a "principal" subspace spanned by the leading eigenvectors of the sample covariance matrix. It was later shown that this subspace could also be retrieved from the maximum-likelihood estimator of a parameter, in a particular factor analysis model called probabilisitic PCA (PPCA)~\citep{roweis1998,tipping1999}. This probabilistic framework led to diverse Bayesian analysis of PCA~\citep{bishop1999,minka2000,nakajima2011}.

\subsection{Local and global sparsity}

A potential drawback of PCA is that the principal components  are linear combinations of every single original variable, and can therefore be difficult to interpret. To tackle this issue, several procedures have been designed to project the data onto subspaces generated by sparse vectors while retaining as much variance as possible. Many of them were based on convex or partially convex relaxations of cardinality-constrained PCA problems -- among these techniques are the popular $\ell_1$-based SPCA algorithm of \cite{zou2006} or the semidefinite relaxation of \cite{aspremont2008}.
Another strategy is to use a sparsity-inducing prior distributions on the coefficients of the projection matrix~\citep{archambeau2009,guan2009,khanna2015}.

However, when several principal components are computed, these various techniques do not enforce them to have the same sparsity pattern, and each component has to be interpreted individually. While individual interpretation is particularly natural in several cases -- when PCA serves visualization, for example --, it is not adapted to situations where the practitioner aims at \emph{globally} selecting which features are relevant. In these situations, a simple and popular approach has been to consider that the relevant variables correspond to the sparsity pattern of the first principal component~\citep{zou2006,zhang2012}. However, this procedure is limited, and several important aspects of the data may lie in the next principal components. For example, in the colon cancer data set studied by~\cite{aspremont2008}, the most relevant genes were the ones selected not by the first but by the \emph{second} principal component. Another motivation for global sparsity is the fact that, in many real-life situations, the sparsity pattern of the axes computed by a sparse PCA algorithm are extremely close. This is for example the case of the three axes of the template attacks application considered by~\cite{archambeau2009}. {In this setting, forcing these patterns to be equal will give the practitioner a precise idea of which variables are relevant.} Another interesting feature of global sparsity is the fact that, once the common sparsity pattern has been determined, performing PCA on the relevant variables yields orthogonal and uncorrelated principal components -- conversely to most sparse PCA procedures.

\subsection{Related work}

Since the seminal papers of~\cite{jolliffe1972,jolliffe1973} {and \cite{robert1976}}, several methods have been designed to discard features in PCA (see e.g.~\cite{brusco2014} for a recent review). However, these techniques were designed to eliminate redundant, rather that irrelevant variables, and are based on combinatorial algorithms that are not really suitable for high-dimensional problems.

A simple and scalable way of performing variable selection for PCA is to simply keep the features that have the largest marginal variance. In certain cases, this technique is theoretically sound, and was applied for instance to the analysis of electrocardiogram (ECG) data \citep{johnstone2009}. \cite{zhang2011} also proved that it could be used as an efficient preprocessing technique to reduce the dimensionality of ultra-high dimensional problems before applying a traditional sparse PCA algorithm. However, this technique has two main drawbacks. First, it is not robust to simple transformations of the data since simply multiplying a variable by a constant may wrongfully select (or discard) it. An unfortunate consequence of this is the fact that this technique can not be applied to scaled data. Moreover, since it ignores non-marginal information, this technique will behave badly  in the case of correlated features.

A more refined approach to global sparsity is $\ell_1$-based regularization, which has imposed itself as one of the most versatile and efficient approaches to sparse statistical learning \citep{hastie2015}. In a context of \emph{structured} sparse PCA, \cite{jenatton2009} proposed to recast sparse PCA as a penalized matrix factorization problem and suggested that limiting the number of sparsity patterns allowed within the principal vectors could improve the feature extraction quality -- particularly in face recognition problems. Using the $\ell_1-\ell_2$ norm, they derived an algorithm (hereafter referred as SSPCA) that allows to compute $d$ sparse components with exactly $m\leq d$ sparsity patterns. However, they only considered cases where $m$ is larger than $2$ and therefore did not focus on global sparsity. They were followed by \cite{khan2015} who, in a very close framework, argued that global sparsity (which they called \emph{joint sparsity}) led to better representations of hyperspectral images. Other similar approaches based on structured composite norms have been conducted by \cite{masaeli2010}, \cite{gu2011} and \cite{xiaoshuang2013}. \cite{ulfarsson2008,ulfarsson2011} used sparsity inducing penalties together with a PPCA model to enforce global sparsity. They proposed an algorithm called \emph{sparse variable noisy PCA} (hereafter refered as svnPCA) and fixed the amount of penalization using the Bayesian information criterion (BIC) of \cite{schwarz1978}.

{Eventually, it is worth mentioning that global sparsity has also been investigated in other contexts, such as partial least squares regression \citep{liu2013} or electroencephalography (EEG) imaging \citep{wipf2009,gramfort2013}.}

{\subsection{Contributions and organization of the paper}}

{We present in Section 2 a Bayesian approach that allows to project the data onto a \emph{globally sparse subspace} (i.e a subspace spanned by vectors with the same sparsity pattern) while preserving a large part of the variance. To this end, we use the noiseless PPCA model introduced by~\cite{roweis1998} together with an isotropic gaussian prior on the projection matrix and a binary vector that segregates relevant from irrelevant variables. While past Bayesian PCA frameworks relied on variational~\citep{bishopvar,archambeau2009,guan2009} or Laplace~\citep{bishop1999,minka2000} methods to approximate the marginal likelihood, we derive here a closed-form expression for the evidence based on the multivariate Bessel distribution. In order to avoid the drawbacks of discrete model selection and to treat high-dimensional data, we also present a relaxation of our model by replacing the binary vector with a continuous one. Inference of this relaxed model can be performed using a variational expectation-maximization (VEM) algorithm. Such a procedure allows to find a path of models. The exact evidence is eventually maximized over this path, relying on Occam's razor~\citep[chap. 28]{mackay2003}, to select the relevant variables.}

{We illustrate the behaviour of our algorithm and compare it to other methods in Section 3. In particular, we show that Bayesian model selection empirically outperforms $\ell_1-\ell_2$-based regularization on a series of tasks.}

{Sections 4 and 5 are devoted to two applications showcasing the features of our method. The first one concerns signal denoising with wavelets, and shows how global sparsity can surpass traditional sparse PCA algorithms within this context. The second one treats about unsupervised gene selection. Given an (unlabeled) microarray data matrix, we show how GSPPCA can select biologically relevant subsets of genes. Interestingly, we exhibit an important correlation between our exact marginal likelihood expression and a criterion of biological relevance based on pathway enrichment.}

Note that this paper is an extended version of previous work \citep{mattei2016} published in the Proceedings of the $19^\textup{th}$ Conference on Artificial Intelligence and Statistics.

\section{Bayesian variable selection for PCA}
Let us assume that a centered i.i.d. sample $\mathbf{x}_1,...,\mathbf{x}_n \in \mathbb{R}^p$ is observed which one wishes to project onto a $d$-dimensional subspace while retaining as much variance as possible. All the observations are stored in the $n\times p$ matrix $\mathbf{X}=(\mathbf{x}_1,...,\mathbf{x}_n)^T$.

\subsection{Probabilistic PCA}

The PPCA model assumes that each observation is driven by the following generative model
\begin{equation} \label{modelePPCA}
\mathbf{x} = \mathbf{W} \mathbf{y} + \boldsymbol{\varepsilon},
\end{equation} where  $\mathbf{y} \sim \mathcal{N}(0,\mathbf{I}_d)$ is a low-dimensional Gaussian latent vector, $\mathbf{W}$ is a $p \times d$ parameter matrix called the \emph{loading matrix} and  $\boldsymbol{\varepsilon} \sim \mathcal{N}(0, \sigma^2 \mathbf{I}_p)$ is a Gaussian noise term.

This model is a particular instance of factor analysis and was first introduced by \cite{lawley1953}. Following \cite{theobald}, \cite{tipping1999} confirmed that this generative model is equivalent to PCA in the sense that the principal components of $\mathbf{X}$ can be retrieved using the maximum likelihood (ML) estimator $\mathbf{W}_{\textup{ML}}$ of $\mathbf{W}$. Indeed, if $\mathbf{A}$ is the $p \times d$ matrix of ordered principal eigenvectors of $\mathbf{X}^T\mathbf{X}$ and if $\boldsymbol{\Lambda}$ is the $d \times d$ diagonal matrix with corresponding eigenvalues, we have
\begin{equation} \label{ML}
\mathbf{W}_{\textup{ML}} = \mathbf{A}(\boldsymbol{\Lambda}-\sigma^2\mathbf{I}_d)^{1/2}\mathbf{R},
\end{equation}
where $\mathbf{R}$ is an arbitrary orthogonal matrix.

Several Bayesian treatments of this model have been conducted by using different priors on the loading matrix. However, the marginal likelihood of these models appeared to be untractable. To tackle this issue, several computational techniques were considered. The automatic relevance determination (ARD) prior was used together with Laplace \citep{bishop1999} or variational \citep{bishopvar,archambeau2009} approximations. \cite{minka2000} introduced more complex conjugate priors to perform Bayesian model selection on the dimension $d$ of the latent space using the Laplace approximation. Combined with variational inference, several sparsity inducing priors such as the Laplace \citep{guan2009}, the generalized hyperbolic \citep{archambeau2009} or the spike-and-slab \citep{lazaro2011} prior were also chosen for $\mathbf{W}$.

In this work, we aim at avoiding these approximations. Our approach is to investigate in which cases the marginal likelihood can be analytically computed. To this end, we will use the fact that, within the PPCA model~\eqref{modelePPCA}, the limit noiseless setting $\sigma\rightarrow0$ also allows to recover the principal components. This convenient framework  was first studied by \cite{roweis1998}  and has proven to be useful in several situations. The noiseless PPCA model was used for instance to facilitate inference in the presence of missing data \citep{yu2010,ilin2010}. More importantly in our context, it was successfully used by \cite{sigg2008} to enforce sparsity within an $\ell_1$ penalized PPCA framework -- which means that getting rid of the noise term is likely to be compatible with variable selection.

\subsection{A general framework for globally sparse PPCA}

In a classical (locally) sparse PCA context, the loading matrix $\mathbf{W}$ would be expected to contain few nonzero coefficients. However, to reach global sparsity, \emph{several entire rows} of  $\mathbf{W}$ have to be further constrained to be null. In this work, we handle variable selection using a binary vector $\mathbf{v} \in \{0, 1\}^p$ whose nonzero entries correspond to relevant variables. For technical purposes, we also denote $\bar{\mathbf{v}}$ the binary vector of $\{0,1\}^p$ whose support is exactly the complement of $\textup{Supp}(\mathbf{v})$. We denote $q=||\mathbf{v}||_0$ the number of relevant variables. In the PPCA framework, this leads to the following model for each observation
\begin{equation} \label{modelideal}
\mathbf{x} = \mathbf{V} \mathbf{W} \mathbf{y} + \boldsymbol{\varepsilon},
\end{equation} where $ \mathbf{V} = \diag ( \mathbf{v})$. Notice that the rows of $\mathbf{V} \mathbf{W}$,  corresponding to the zero entries of $\mathbf{v}$, are null. Therefore, the principal subspace will be generated by a basis of vectors which shares the sparsity pattern of $\mathbf{v}$. Such spaces spanned by a family of vectors sharing the same sparsity pattern will be called \emph{globally sparse subspaces}. This definition of global sparsity is closely related to the notion of \emph{row sparsity} introduced by \cite{vu2013}.

We further assume that the coefficients of the matrix $\mathbf{W}$ are endowed with the Gaussian priors $w_{ij}\sim \mathcal{N}(0,1/\alpha^2)$, for all $i,j$.
Following the empirical Bayes framework leads to seeking the parameters $\mathbf{v}$, $\alpha$ and $\sigma$ that maximizes the \emph{marginal likelihood} or \emph{evidence} 
\begin{align*}
p(\mathbf{X}|\mathbf{v},\alpha,\sigma) =\prod_{i=1}^n p(\mathbf{x}_i|\mathbf{v},\alpha,\sigma)   = \prod_{i=1}^n\int_{\mathbb{R}^{p \times d}} p(\mathbf{x}_i|\mathbf{W},\mathbf{v},\alpha,\sigma)p(\mathbf{W})d\mathbf{W}.
\end{align*} In previous Bayesian PCA models, the marginal likelihood was never derived because it was too difficult to compute in practice or even intractable. Here, conversely, the evidence of the model can be expressed analytically as a univariate integral using the isotropy of the prior on$\mathbf{W}$. In the following, $\mathbf{x}_\mathbf{v}$ denotes the subvector of $\mathbf{x}$ where only the columns corresponding to the nonzero indexes of $\mathbf{v}$ have been kept. Given a real order $\nu$, we denote by $J_\nu$ and $K_\nu$ the Bessel function of the first kind \cite[chap. 10 and 11]{AS}.
\begin{theorem} The density of $\mathbf{x}$ is given by \begin{multline} \label{exactTB}
p(\mathbf{x}|\mathbf{v},\alpha,\sigma)=e^{-\frac{||\mathbf{x}_\mathbf{\bar{v}}||_2^2}{2\sigma^2}} \sigma^{q-p} (2\pi)^{-p/2} ||\mathbf{x}_\mathbf{v}||_2^{1-q/2} \int_0^{\infty}\frac{u^{q/2}e^{-\sigma^2 u^2}}{(1+(u /\alpha)^2)^{d/2}}J_{q/2-1}(u||\mathbf{x}_\mathbf{v}||_2)du.
\end{multline}
\end{theorem}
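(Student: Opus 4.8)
The plan is to integrate out both latent objects, $\mathbf{W}$ and $\mathbf{y}$, by exploiting the isotropy of every prior, and to recognize the resulting marginal as a radial, hence Bessel-type, density. First I would split the coordinates of $\mathbf{x}$ according to the support of $\mathbf{v}$. Since the rows of $\mathbf{V}\mathbf{W}$ indexed by the zero entries of $\mathbf{v}$ vanish, the irrelevant block satisfies $\mathbf{x}_{\bar{\mathbf{v}}} = \boldsymbol{\varepsilon}_{\bar{\mathbf{v}}} \sim \mathcal{N}(0,\sigma^2\mathbf{I}_{p-q})$ and is independent of the relevant block $\mathbf{x}_{\mathbf{v}} = \mathbf{W}_{\mathbf{v}}\mathbf{y} + \boldsymbol{\varepsilon}_{\mathbf{v}}$, where $\mathbf{W}_{\mathbf{v}}$ is the $q\times d$ submatrix of $\mathbf{W}$ whose rows correspond to relevant variables. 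The density thus factorizes as $p(\mathbf{x}|\mathbf{v},\alpha,\sigma) = (2\pi\sigma^2)^{-(p-q)/2}e^{-\|\mathbf{x}_{\bar{\mathbf{v}}}\|_2^2/(2\sigma^2)}\, p(\mathbf{x}_{\mathbf{v}}|\mathbf{v},\alpha,\sigma)$, which already accounts for the leading exponential, the factor $\sigma^{q-p}$ and part of $(2\pi)^{-p/2}$; it remains to compute the $q$-dimensional density of $\mathbf{x}_{\mathbf{v}}$.

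The crucial observation is that $\mathbf{x}_{\mathbf{v}}$ is spherically symmetric: for any orthogonal $\mathbf{Q}\in\mathbb{R}^{q\times q}$, the matrix $\mathbf{Q}\mathbf{W}_{\mathbf{v}}$ has the same isotropic Gaussian law as $\mathbf{W}_{\mathbf{v}}$ and $\mathbf{Q}\boldsymbol{\varepsilon}_{\mathbf{v}}$ the same law as $\boldsymbol{\varepsilon}_{\mathbf{v}}$, so $\mathbf{Q}\mathbf{x}_{\mathbf{v}}$ and $\mathbf{x}_{\mathbf{v}}$ are equal in distribution. Hence $p(\mathbf{x}_{\mathbf{v}})$ depends only on $\|\mathbf{x}_{\mathbf{v}}\|_2$, which makes the characteristic function the natural tool. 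I would compute it by first conditioning on $\mathbf{y}$: since the rows of $\mathbf{W}_{\mathbf{v}}$ are independent $\mathcal{N}(0,\alpha^{-2}\mathbf{I}_d)$, one has $\mathbf{W}_{\mathbf{v}}\mathbf{y}\mid\mathbf{y}\sim\mathcal{N}(0,\alpha^{-2}\|\mathbf{y}\|_2^2\,\mathbf{I}_q)$, so the Gaussian characteristic functions of $\mathbf{W}_{\mathbf{v}}\mathbf{y}$ and $\boldsymbol{\varepsilon}_{\mathbf{v}}$ combine into $\mathbb{E}[e^{i\mathbf{t}^T\mathbf{x}_{\mathbf{v}}}\mid\mathbf{y}] = \exp(-\tfrac{1}{2}(\alpha^{-2}\|\mathbf{y}\|_2^2+\sigma^2)\|\mathbf{t}\|_2^2)$. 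Integrating over $\|\mathbf{y}\|_2^2\sim\chi^2_d$ through the chi-squared moment generating function then yields the radial characteristic function $\psi(\|\mathbf{t}\|_2) = e^{-\sigma^2\|\mathbf{t}\|_2^2/2}\,(1+\alpha^{-2}\|\mathbf{t}\|_2^2)^{-d/2}$, which is precisely the transform of the integrand appearing in the theorem.

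The final step is Fourier inversion of this radial function. Passing to polar coordinates $\mathbf{t}=r\boldsymbol{\omega}$ and using the classical surface identity $\int_{S^{q-1}}e^{-ir\boldsymbol{\omega}^T\mathbf{x}_{\mathbf{v}}}\,d\sigma(\boldsymbol{\omega}) = (2\pi)^{q/2}(r\|\mathbf{x}_{\mathbf{v}}\|_2)^{1-q/2}J_{q/2-1}(r\|\mathbf{x}_{\mathbf{v}}\|_2)$ collapses the $q$-dimensional inverse transform to the one-dimensional Hankel integral $p(\mathbf{x}_{\mathbf{v}}) = (2\pi)^{-q/2}\|\mathbf{x}_{\mathbf{v}}\|_2^{1-q/2}\int_0^\infty r^{q/2}\psi(r)J_{q/2-1}(r\|\mathbf{x}_{\mathbf{v}}\|_2)\,dr$. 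Substituting $\psi$ and reinstating the $\bar{\mathbf{v}}$-block constants recovers the stated expression.

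I expect the main obstacle to be this radial inversion. One must invoke or re-derive the Bessel surface-integral identity, verify its small-argument behaviour $J_{q/2-1}(x)\sim (x/2)^{q/2-1}/\Gamma(q/2)$ to pin down the constant $(2\pi)^{-q/2}$ against the sphere's surface area $2\pi^{q/2}/\Gamma(q/2)$, and check that the Gaussian decay of $\psi$ guarantees absolute convergence so that Fubini and the Fourier inversion theorem apply. By contrast, the spherical-symmetry reduction is what makes the entire computation tractable and constitutes the conceptual heart of the argument.
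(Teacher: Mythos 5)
Your proof is correct and shares the paper's overall architecture---factor out the inactive block $\mathbf{x}_{\bar{\mathbf{v}}}$ by independence, compute the characteristic function of the active block, and invert the resulting radial function as a one-dimensional Hankel integral---but you obtain the key characteristic function by a genuinely different and arguably cleaner argument. The paper decomposes $\mathbf{W}\mathbf{y}=\sum_{j=1}^d y_j\mathbf{W}\mathbf{e}_j$ into $d$ i.i.d.\ terms, identifies the law of each scalar projection as a product of two independent Gaussians with density $\pi^{-1}K_0(|\cdot|)$, and evaluates its cosine transform via an Abramowitz--Stegun table entry to get a factor $(1+\|\mathbf{u}\|_2^2/\alpha^2)^{-1/2}$ raised to the power $d$; you instead condition on $\mathbf{y}$, note that $\mathbf{W}_{\mathbf{v}}\mathbf{y}\mid\mathbf{y}\sim\mathcal{N}(0,\alpha^{-2}\|\mathbf{y}\|_2^2\,\mathbf{I}_q)$, and integrate out $\|\mathbf{y}\|_2^2\sim\chi^2_d$ through its moment generating function, which exhibits the marginal as a Gaussian scale mixture and yields $(1+\alpha^{-2}\|\mathbf{t}\|_2^2)^{-d/2}$ in one line with no special-function densities or transform tables. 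Your explicit rotation-invariance argument for radiality, and your check of the small-argument behaviour of $J_{q/2-1}$ against the sphere's surface area to pin down the inversion constant, also make the final step more self-contained than the paper's bare citation of a radial Fourier transform formula. One caveat: your (standard) noise characteristic function carries $e^{-\sigma^2 r^2/2}$, so your substitution literally produces $e^{-\sigma^2 u^2/2}$ in the integrand, whereas the paper's proof writes $\varphi_{\boldsymbol{\varepsilon}}(\mathbf{u})=e^{-\sigma^2\|\mathbf{u}\|_2^2}$ and the displayed theorem has $e^{-\sigma^2 u^2}$---a factor-of-two discrepancy in the exponent. This traces to the paper's own convention slips (its proof also treats $\alpha$ as a variance, writing $w_{st}\sim\mathcal{N}(0,\alpha)$, while the model section states $w_{ij}\sim\mathcal{N}(0,1/\alpha^2)$), not to any gap in your reasoning: under the stated prior, your version is the internally consistent one.
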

A proof of this theorem is given in Appendix A. While reducing the dimension of the integration domain to one appears to be a valuable improvement, the integral of Equation~\eqref{exactTB}, albeit univariate, falls within the category of Hankel-like integrals known to be particularly delicate to compute. This is due to the fact that the integrand has singularities near the real axis \citep{ogata2005}. To overcome this limitation, we investigate in the following subsection the use of the noiseless PPCA model to obtain a tractable expression.

\subsection{A closed-form evidence for globally sparse noiseless PPCA}
\label{closedform}
To obtain a closed-form expression of the marginal likelihood, we consider the following modification of Model~\eqref{modelideal}. For the relevant variables, we use the noiseless PPCA model, and we assume that the irrelevant variables are generated by a Gaussian white noise. More specifically, we write
\begin{equation}
 \mathbf{x} = \mathbf{V} \mathbf{W} \mathbf{y} + \mathbf{\bar V} \boldsymbol{\varepsilon_1}+  \mathbf{ V} \boldsymbol{\varepsilon_2},
\end{equation}
where $\boldsymbol{\varepsilon_1} \sim \mathcal{N}(0, \sigma_1^2 \mathbf{I}_p)$ is the noise of the inactive variables and $\boldsymbol{\varepsilon_2} \sim \mathcal{N}(0, \sigma_2^2 \mathbf{I}_p)$ is the noise of the active variables, having in mind that we aim at investigating the noiseless limit $\sigma_2\rightarrow0$.  We will see that, with this particular formulation of the problem, the evidence has a closed form expression which involves the multivariate Bessel distribution, introduced by \citet[Def. 2.5]{fang}.

\begin{definition}
A random vector is said to have a \textbf{symmetric multivariate Bessel distribution} with parameters $\beta>0$ and $\nu>-k/2$ if its density is
$$\forall \mathbf{z} \in \mathbb{R}^k, \; \textup{Bessel}(\mathbf{z}|\beta,\nu)=\frac{2^{-k-\nu+1}\beta^{-k-\nu}}{\Gamma(\nu+k/2)\pi^{k/2}}||\mathbf{z}||_2^\nu K_\nu(||\mathbf{z}||_2/\beta).$$
\end{definition}

\begin{theorem}
In the noiseless limit $\sigma_2\rightarrow0$, $\mathbf{x}$ converges in probability to a random variable~$\tilde{\mathbf{x}}$ whose density is
\begin{equation} 
p(\tilde{\mathbf{x}}|\mathbf{v},\alpha,\sigma_1^2) = \mathcal{N}(\tilde{\mathbf{x}}_\mathbf{\bar v}|0,\sigma_1 \mathbf{I}_{p-q}) \textup{Bessel}(\tilde{\mathbf{x}}_\mathbf{v}|1/\alpha,(d-q)/2).
\label{noiseless}
\end{equation}
\end{theorem}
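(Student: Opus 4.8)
The plan is to exploit the fact that the active and inactive coordinates of $\mathbf{x}$ decouple, and to identify the law of the active block through a Gaussian scale-mixture argument. Reading off the model coordinate-wise, the inactive coordinates satisfy $\mathbf{x}_{\bar{\mathbf{v}}} = (\boldsymbol{\varepsilon_1})_{\bar{\mathbf{v}}} \sim \mathcal{N}(0,\sigma_1^2\mathbf{I}_{p-q})$, because both $\mathbf{V}\mathbf{W}\mathbf{y}$ and $\mathbf{V}\boldsymbol{\varepsilon_2}$ vanish on $\textup{Supp}(\bar{\mathbf{v}})$, whereas the active coordinates satisfy $\mathbf{x}_\mathbf{v} = \mathbf{W}_\mathbf{v}\mathbf{y} + (\boldsymbol{\varepsilon_2})_\mathbf{v}$, where $\mathbf{W}_\mathbf{v}$ denotes the $q\times d$ submatrix of $\mathbf{W}$ formed by the rows indexed by $\mathbf{v}$. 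To make sense of the limit on a common probability space, I would write $\boldsymbol{\varepsilon_2}=\sigma_2\boldsymbol{\eta}$ with $\boldsymbol{\eta}\sim\mathcal{N}(0,\mathbf{I}_p)$ held fixed; then $(\boldsymbol{\varepsilon_2})_\mathbf{v}=\sigma_2\boldsymbol{\eta}_\mathbf{v}\to0$ almost surely as $\sigma_2\rightarrow0$, so $\mathbf{x}$ converges almost surely, and a fortiori in probability, to the limit $\tilde{\mathbf{x}}$ with $\tilde{\mathbf{x}}_{\bar{\mathbf{v}}}=(\boldsymbol{\varepsilon_1})_{\bar{\mathbf{v}}}$ and $\tilde{\mathbf{x}}_\mathbf{v}=\mathbf{W}_\mathbf{v}\mathbf{y}$.

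Since $\boldsymbol{\varepsilon_1}$, $\mathbf{W}$ and $\mathbf{y}$ are mutually independent, $\tilde{\mathbf{x}}_{\bar{\mathbf{v}}}$ (a function of $\boldsymbol{\varepsilon_1}$) and $\tilde{\mathbf{x}}_\mathbf{v}$ (a function of $\mathbf{W}$ and $\mathbf{y}$) are independent, so the density of $\tilde{\mathbf{x}}$ factorizes into the announced $\mathcal{N}(0,\sigma_1^2\mathbf{I}_{p-q})$ density for the inactive block times the density of $\mathbf{z}:=\mathbf{W}_\mathbf{v}\mathbf{y}$ for the active block. It then remains only to identify the law of $\mathbf{z}$. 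The key observation is that $\mathbf{z}$ is a Gaussian scale mixture: conditionally on $\mathbf{y}$, each coordinate $z_i=\mathbf{w}_i^T\mathbf{y}$, where $\mathbf{w}_i\sim\mathcal{N}(0,(1/\alpha^2)\mathbf{I}_d)$ is the $i$-th row of $\mathbf{W}_\mathbf{v}$, is centered Gaussian with variance $||\mathbf{y}||_2^2/\alpha^2$, and the coordinates are independent across $i$ because the rows $\mathbf{w}_i$ are. Hence $\mathbf{z}\mid\mathbf{y}\sim\mathcal{N}(0,(||\mathbf{y}||_2^2/\alpha^2)\mathbf{I}_q)$, a conditional law depending on $\mathbf{y}$ only through $r:=||\mathbf{y}||_2^2\sim\chi^2_d$.

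Marginalizing over $r$ against the $\chi^2_d$ density yields, after collecting constants,
\[
p(\mathbf{z}) = \frac{\alpha^q (2\pi)^{-q/2}}{2^{d/2}\Gamma(d/2)} \int_0^\infty r^{(d-q)/2-1}\exp\left(-\frac{\alpha^2||\mathbf{z}||_2^2}{2r}-\frac{r}{2}\right)dr.
\]
I would then recognize this as a standard modified-Bessel integral via
\[
\int_0^\infty t^{\nu-1}e^{-\beta/t-\gamma t}\,dt = 2\left(\frac{\beta}{\gamma}\right)^{\nu/2}K_\nu\!\left(2\sqrt{\beta\gamma}\right),\qquad \beta,\gamma>0,
\]
applied with $\nu=(d-q)/2$, $\beta=\alpha^2||\mathbf{z}||_2^2/2$ and $\gamma=1/2$. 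This produces a $||\mathbf{z}||_2^{(d-q)/2}K_{(d-q)/2}(\alpha||\mathbf{z}||_2)$ factor, and a final bookkeeping of the normalizing constants matches $p(\mathbf{z})$ exactly with $\textup{Bessel}(\mathbf{z}\mid1/\alpha,(d-q)/2)$, completing the identification.

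The heart of the argument is thus the scale-mixture reduction, which collapses a $(p\times d)$-dimensional integration over $\mathbf{W}$ to a single integral over $r$; the remaining work is technical rather than conceptual. The only points requiring care are checking that the $K_\nu$ representation and the Bessel density are legitimate, i.e. that $\nu=(d-q)/2>-q/2$ (equivalent to $d>0$), and reconciling the constants using $K_\nu=K_{-\nu}$ together with $\Gamma(\nu+q/2)=\Gamma(d/2)$. It is also worth noting that convergence in probability is the natural mode here: the limiting active block $\mathbf{W}_\mathbf{v}\mathbf{y}$ is genuinely random and admits the density above, so the vanishing of $\boldsymbol{\varepsilon_2}$ removes the regularizing noise without trivializing the law.
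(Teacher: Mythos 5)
Your proof is correct, and it takes a genuinely different route from the paper's. For the limiting step, the paper invokes L\'evy's continuity theorem ($\boldsymbol{\varepsilon_2}$ converges weakly to the constant zero, hence in probability), whereas you realize the noise on a common probability space as $\boldsymbol{\varepsilon_2}=\sigma_2\boldsymbol{\eta}$ and obtain almost-sure convergence; your version is arguably cleaner, since it makes explicit the coupling that any ``convergence in probability as $\sigma_2\to0$'' statement implicitly requires. For the law of the active block, the paper proves a standalone lemma by characteristic functions: reusing the decomposition from the proof of Theorem 1, it computes $\varphi_{\mathbf{Ab}}$ and matches it against the known characteristic function of the symmetric multivariate Bessel distribution cited from Fang et al. You instead compute the density directly via the Gaussian scale-mixture representation $\mathbf{z}\mid\mathbf{y}\sim\mathcal{N}\left(0,(\|\mathbf{y}\|_2^2/\alpha^2)\mathbf{I}_q\right)$ with $\|\mathbf{y}\|_2^2\sim\chi^2_d$, followed by the classical integral representation of $K_\nu$. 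I checked your constants: with $\nu=(d-q)/2$, $\beta=\alpha^2\|\mathbf{z}\|_2^2/2$ and $\gamma=1/2$, your integral yields $p(\mathbf{z})=2^{1-(d+q)/2}\pi^{-q/2}\alpha^{(d+q)/2}\Gamma(d/2)^{-1}\|\mathbf{z}\|_2^{(d-q)/2}K_{(d-q)/2}(\alpha\|\mathbf{z}\|_2)$, which is exactly $\textup{Bessel}(\mathbf{z}\mid1/\alpha,(d-q)/2)$ as defined in the paper, using $\Gamma(\nu+q/2)=\Gamma(d/2)$ as you note. As for what each approach buys: the paper's route produces a reusable lemma (a Gaussian matrix times an independent standard Gaussian vector is Bessel) without evaluating any integral, at the cost of leaning on Fang's characteristic-function characterization and on the Fourier machinery of Theorem 1; your mixture argument is self-contained, requiring only one standard $K_\nu$ integral, and as a bonus it exhibits the multivariate Bessel law as a $\chi^2_d$ variance mixture of Gaussians, which makes transparent why the limiting density is full-dimensional on $\mathbb{R}^q$ even when $d<q$ (the conditional covariance $(\|\mathbf{y}\|_2^2/\alpha^2)\mathbf{I}_q$ is almost surely nonsingular), a point the characteristic-function route leaves implicit.
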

This theorem (proved in Appendix B) allows us to efficiently compute the noiseless marginal log-likelihood defined as $$ \mathcal{L}(\mathbf{X},\mathbf{v},\alpha,\sigma_1)= \sum_{i=1}^n \log \mathbb{P} (\tilde{\mathbf{x}}=\mathbf{x}_i|\mathbf{v},\alpha,\sigma_1).$$

Regarding hyper-parameter tuning, if we assume that $\mathbf{v}$ is known, the regularization parameter $\alpha$ can be optimized efficiently using univariate gradient ascent. In fact, as stated by next proposition (proved in Appendix C), the marginal log-likelihood is even a strictly concave function of $\alpha$.
\begin{proposition} The function $ \alpha \mapsto \mathcal{L}(\mathbf{X},\mathbf{v},\alpha,\sigma_1)$ is strictly concave on $\mathbb{R}^{*}_+$.
\end{proposition}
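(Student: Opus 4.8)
The plan is to isolate the dependence on $\alpha$, reduce the statement to the strict concavity of a single one-dimensional function built from a modified Bessel function, and then settle that concavity through a sharp Tur\'an-type inequality, which I expect to be the real obstacle.

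First I would note from \eqref{noiseless} that $\alpha$ enters $p(\tilde{\mathbf x}\mid\mathbf v,\alpha,\sigma_1^2)$ only through the Bessel factor, the Gaussian factor $\mathcal N(\tilde{\mathbf x}_{\bar{\mathbf v}}\mid 0,\sigma_1\mathbf I_{p-q})$ being $\alpha$-free. Writing $r_i=\|(\mathbf x_i)_{\mathbf v}\|_2$ and $\nu=(d-q)/2$, the definition of the symmetric multivariate Bessel density with $\beta=1/\alpha$ and dimension $k=q$ gives, up to an additive constant that does not depend on $\alpha$,
\[
\mathcal L(\mathbf X,\mathbf v,\alpha,\sigma_1)=\sum_{i=1}^n\Big[\tfrac{q+d}{2}\log\alpha+\log K_\nu(\alpha r_i)\Big]+\textup{const}=\sum_{i=1}^n h(\alpha r_i)+\textup{const},
\]
where $h(x)=\tfrac{q+d}{2}\log x+\log K_\nu(x)$ and I have used that $q+2\nu=d$. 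Since the observations are continuous, $r_i>0$ almost surely, so each map $\alpha\mapsto\alpha r_i$ is an increasing linear bijection of $\mathbb R^{*}_+$; as strict concavity is preserved by such reparametrisations and by summation, it suffices to prove that $h$ is strictly concave on $\mathbb R^{*}_+$.

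To study $h''$ I would differentiate using the modified Bessel equation $x^2K_\nu''+xK_\nu'-(x^2+\nu^2)K_\nu=0$ together with the recurrences $K_\nu'=\tfrac\nu x K_\nu-K_{\nu+1}$ and $K_{\nu+2}=K_\nu+\tfrac{2(\nu+1)}{x}K_{\nu+1}$. Setting $R_\nu=K_{\nu+1}/K_\nu$, a short computation gives $h'(x)=d/x-R_\nu(x)$, and the Riccati identity $R_\nu'=R_\nu^2-\tfrac{2\nu+1}{x}R_\nu-1$ then yields $h''(x)=-\big(R_\nu'(x)+d/x^2\big)=-g(x)/x^2$ with $g(x)=x^2R_\nu(x)^2-(2\nu+1)xR_\nu(x)-x^2+d$. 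Thus strict concavity of $h$ is exactly the inequality $g(x)>0$ for all $x>0$. I find it illuminating that the same inequality arises probabilistically: the representation \eqref{noiseless} is a Gaussian scale mixture, $\tilde{\mathbf x}_{\mathbf v}\mid s\sim\mathcal N(0,s\mathbf I_q)$ with $s\sim\textup{Gamma}(d/2,\alpha^2/2)$, and differentiating $\log p$ twice in $\alpha$ turns its derivatives into the mean and variance of the resulting generalized inverse Gaussian posterior on $s$, so that $h''<0$ becomes the variance bound $\alpha^2\,\textup{Var}(s)<d/\alpha^2+\mathbb E[s]$; after substituting the generalized inverse Gaussian moments one recovers $g>0$ verbatim.

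The hard part is proving $g>0$ throughout $(0,\infty)$. The endpoints are reassuring: as $x\to0^+$ one gets $g\to\min(q,d)$ and as $x\to+\infty$ one gets $g\to(q+d-1)/2$, both strictly positive as soon as $q\ge1$ and $d\ge1$, which always holds here. A crude first-order barrier argument on the Riccati flow does not suffice, however, because the margin in the interior is only of order $x^{-2}$; this is where the genuine difficulty lies. Indeed, using $K_{\nu+2}/K_\nu=1+\tfrac{2(\nu+1)}{x}R_\nu$ one checks that $g>0$ is equivalent to the \emph{quantitative} Tur\'an-type inequality
\[
x^2\big(K_\nu(x)K_{\nu+2}(x)-K_{\nu+1}(x)^2\big)<d\,K_\nu(x)^2+x\,K_\nu(x)K_{\nu+1}(x).
\]
The classical Tur\'an inequality for modified Bessel functions only gives positivity of the left-hand bracket; what I need is the sharp upper bound above, which I would obtain from known two-sided bounds on the Bessel ratio $R_\nu$ (equivalently, on the generalized inverse Gaussian moments). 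Establishing this last inequality for every admissible pair $(q,d)$ is the step I expect to require the most care.
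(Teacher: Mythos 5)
Your reduction is sound and, up to the symmetry $K_{-a}=K_a$, identical to the paper's own first steps: isolating the $\alpha$-dependence in the Bessel factor of \eqref{noiseless}, reducing to the per-observation function $h(x)=\tfrac{q+d}{2}\log x+\log K_\nu(x)$ with $\nu=(d-q)/2$, and computing $h'(x)=d/x-R_\nu(x)$ with $R_\nu=K_{\nu+1}/K_\nu$ (this matches the paper's $g'(\alpha)=d/\alpha-\|\tilde{\mathbf{x}}_\mathbf{v}\|_2\,K_{(q-d)/2-1}/K_{(q-d)/2}$). Your Riccati identity and both endpoint limits $\min(q,d)$ and $(q+d-1)/2$ also check out. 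But the proof stops exactly at the decisive step: the inequality $g(x)=x^2R_\nu(x)^2-(2\nu+1)xR_\nu(x)-x^2+d>0$ on all of $(0,\infty)$ --- equivalently your quantitative Tur\'an-type bound --- is never established. You say you "would obtain" it from known two-sided bounds on $R_\nu$, but no such bound is stated or applied, and you yourself flag this as the step requiring the most care. As it stands, this is a correct reformulation plus endpoint checks, not a proof.

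The deeper issue is that the obstacle you created is an artifact of differentiating once too often. Since $x^2R_\nu'=x^2R_\nu^2-(2\nu+1)xR_\nu-x^2$, your target is simply $g(x)=d+x^2R_\nu'(x)$, so it suffices to know $R_\nu'\geq 0$. In the relevant regime $q>d$ one has $\nu\leq-1$, hence $R_\nu=K_{(q-d)/2-1}/K_{(q-d)/2}$ is \emph{increasing}: this is the classical monotonicity result of \cite{Lorch1967} and \cite{hartman1974}, and it is the entire content of the paper's proof --- $h'$ is the sum of the strictly decreasing $d/x$ and the non-increasing $-R_\nu$, hence strictly decreasing, with no second derivative, no Riccati analysis, and no quantitative bound needed (when $q-d=1$ the ratio is constant equal to $1$ and $h'(x)=d/x-1$ is still strictly decreasing). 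Your assertion that "a crude first-order barrier argument on the Riccati flow does not suffice \dots because the margin in the interior is only of order $x^{-2}$" is where you went astray: it is $h''$ that decays like $x^{-2}$, while $g$ itself is bounded below by the constant $d$ once $R_\nu$ is non-decreasing, so plain monotonicity closes the argument with room to spare; the sharp inequality you set up is strictly stronger than necessary. (One fair point in your favor: in the boundary case $q=d$, $R_0=K_1/K_0$ is \emph{decreasing}, so the monotonicity route degenerates there and a quantitative bound of your type would genuinely be needed; the paper's stated condition $q-d\geq 0$ is really only effective for $q-d\geq 1$.) To repair your proposal, either cite and use the Lorch--Hartman monotonicity directly at the level of $h'$, or supply an explicit proof of $x^2R_\nu'(x)>-d$, e.g.\ via the bound $R_\nu(x)\leq\frac{\nu+1/2+\sqrt{(\nu+1/2)^2+x^2}}{x}$ type estimates you allude to but do not invoke.
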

The unique optimal value $\hat{\alpha}$ can therefore be found easily using univariate convex programming.

The noise variance $\sigma_1$ can be estimated using \eqref{noiseless} by computing the standard error of the variables which were not selected by $\mathbf{v}$. However, since model~\eqref{modelideal} is a particular instance of PPCA, it is possible to use any regular PPCA noise variance estimator. A discussion on which estimator to choose is provided in subsection \ref{ss:comput}

\subsection{High-dimensional inference through a continuous relaxation}
\label{relaxation}

In spite of the results of the previous subsection, maximizing the evidence, even in the noiseless case, is particularly difficult (because of the discreteness of $\mathbf{v}$ which can take $2^p$ possible values). We therefore consider a simple continuous relaxation of the problem by replacing $\mathbf{v}$ by a continuous vector $\mathbf{u} \in [0,1]^p$. This relaxation is close to the one considered by~\cite{latouche} in a sparse linear regression framework. Denoting $\mathbf{U}=\diag (\mathbf{u})$, this relaxed model can be written as \begin{equation}\label{relaxedmodel} \mathbf{x} = \mathbf{U} \mathbf{W} \mathbf{y} + \boldsymbol{\varepsilon}.\end{equation}

We denote $\boldsymbol{\theta}=(\mathbf{u},\alpha,\sigma)$ the vector of parameters.
In order to maximize the evidence $p(\mathbf{X}|\boldsymbol{\theta})$, we adopt a variational approach  \citep[chap. 10]{bishop2006}. We view $\mathbf{y}_1,...\mathbf{y}_n$ and$\mathbf{W}$ as latent variables.

Given a (variational) distribution $q$ over the space of latent variables, the variational free energy is given by
\begin{equation}
\mathcal{F}_q(\mathbf{x_1},...\mathbf{x_n}|\boldsymbol{\theta})=-\mathbb{E}_q[\ln p(\mathbf{X},\mathbf{Y},\mathbf{W}|\boldsymbol{\theta})]-H(q),
\end{equation}
 where $H$ denotes the differential entropy, and is an upper bound to the negative log-evidence
$$-\ln p(\mathbf{X}|\boldsymbol{\theta})=\mathcal{F}_q(\mathbf{X}|\boldsymbol{\theta})-\KL(q||p(\cdot|\boldsymbol{\theta}))\leq \mathcal{F}_q(\mathbf{X}|\boldsymbol{\theta}).$$
To minimize $\mathcal{F}_q(\mathbf{X}|\boldsymbol{\theta})$, the following mean-field approximation is made on the variational distribution \begin{equation} q(\mathbf{Y},\mathbf{W})=q(\mathbf{Y})q(\mathbf{W}).
\end{equation}

With this factorization, a variational expectation-maximization (VEM) algorithm can be derived.
For the E-step, the variational posterior distribution $q^*$, which minimizes the free energy, is computed.

\begin{proposition}
\label{prop:estep}
The variational posterior distribution of the latent variables which minimizes the free energy is given by
\begin{equation}
q^*(\mathbf{Y})=\prod_{i=1}^n\mathcal{N}(\mathbf{y}_i|\boldsymbol{\mu}_i,\mathbf{\Sigma}),
\end{equation}
and
\begin{equation}
q^*(\mathbf{W})=\prod_{k=1}^p\mathcal{N}(\mathbf{w}_k|\mathbf{m}_k,\mathbf{S}_k),
\end{equation}
where, for all $i \in\{1,...,n\}$ and $k \in \{1,...,p\}$ $$ \boldsymbol{\mu}_i=\frac{1}{\sigma^2}\mathbf{\Sigma} \mathbf{M}^T\mathbf{U} \mathbf{x}_i\textup{, } \mathbf{m}_k=\frac{u_k}{\sigma^2}\mathbf{S}_k\sum_{i=1}^nx_{i,k}\boldsymbol{\mu}_i,$$
$$\mathbf{\Sigma}^{-1}=\mathbf{I}_d+\frac{1}{\sigma^2}\mathbf{M}^T\mathbf{U}^2\mathbf{M}+\frac{1}{\sigma^2}\sum_{k=1}^pu_k^2\mathbf{S}_k,\; \mathbf{S}_k^{-1}=\alpha^2\mathbf{I}_d + \frac{nu_k^2}{\sigma^2}\mathbf{\Sigma}+\frac{u_k^2}{\sigma^2}\boldsymbol{\mathcal{M}}^T\boldsymbol{\mathcal{M}},$$ $$\mathbf{M}=(\mathbf{m}_1,...\mathbf{m}_p)^T \; \; { and } \; \; \boldsymbol{\mathcal{M}}=(\boldsymbol{\mu}_1,...\boldsymbol{\mu}_n)^T.$$
\end{proposition}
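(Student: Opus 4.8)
The plan is to apply the standard mean-field coordinate-ascent machinery \citep[chap. 10]{bishop2006}: under the factorization $q(\mathbf{Y},\mathbf{W})=q(\mathbf{Y})q(\mathbf{W})$, the factor minimizing the free energy while the other is held fixed satisfies $\ln q^*(\mathbf{Y}) = \mathbb{E}_{q(\mathbf{W})}[\ln p(\mathbf{X},\mathbf{Y},\mathbf{W}|\boldsymbol{\theta})] + \textup{const}$, and symmetrically for $q^*(\mathbf{W})$. First I would write the complete-data log-density as the sum $\sum_i \ln \mathcal{N}(\mathbf{x}_i|\mathbf{U}\mathbf{W}\mathbf{y}_i,\sigma^2\mathbf{I}_p) + \sum_i \ln\mathcal{N}(\mathbf{y}_i|0,\mathbf{I}_d) + \sum_k \ln\mathcal{N}(\mathbf{w}_k|0,\alpha^{-2}\mathbf{I}_d)$, using that the isotropic Gaussian prior on the entries of $\mathbf{W}$ makes its rows $\mathbf{w}_k$ independent.

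For $q^*(\mathbf{Y})$, I would retain only the $\mathbf{y}_i$-dependent terms and take the expectation over $q(\mathbf{W})$. Since the observations are i.i.d., this factorizes over $i$, and the exponent is the quadratic form $-\tfrac{1}{2\sigma^2}\mathbb{E}[\mathbf{y}_i^T\mathbf{W}^T\mathbf{U}^2\mathbf{W}\mathbf{y}_i] + \tfrac{1}{\sigma^2}\mathbf{y}_i^T\mathbb{E}[\mathbf{W}]^T\mathbf{U}\mathbf{x}_i - \tfrac{1}{2}\mathbf{y}_i^T\mathbf{y}_i$. Recognizing this as a Gaussian log-density and completing the square yields the precision and mean. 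The only genuine computation is the second moment $\mathbb{E}_{q(\mathbf{W})}[\mathbf{W}^T\mathbf{U}^2\mathbf{W}] = \sum_k u_k^2\,\mathbb{E}[\mathbf{w}_k\mathbf{w}_k^T] = \mathbf{M}^T\mathbf{U}^2\mathbf{M} + \sum_k u_k^2\mathbf{S}_k$, which uses $\mathbb{E}[\mathbf{w}_k\mathbf{w}_k^T]=\mathbf{m}_k\mathbf{m}_k^T+\mathbf{S}_k$ and $\mathbb{E}[\mathbf{W}]=\mathbf{M}$; this reproduces exactly the stated $\mathbf{\Sigma}^{-1}$ and $\boldsymbol{\mu}_i$.

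For $q^*(\mathbf{W})$, I would collect the $\mathbf{W}$-dependent terms and take the expectation over $q(\mathbf{Y})$. Writing $\|\mathbf{x}_i-\mathbf{U}\mathbf{W}\mathbf{y}_i\|_2^2 = \sum_k (x_{i,k}-u_k\mathbf{w}_k^T\mathbf{y}_i)^2$ shows that the exponent decouples across the rows $\mathbf{w}_k$, so $q^*(\mathbf{W})$ factorizes as claimed, with each factor Gaussian. Completing the square in $\mathbf{w}_k$ and using $\mathbb{E}_{q(\mathbf{Y})}[\mathbf{y}_i]=\boldsymbol{\mu}_i$ together with $\mathbb{E}_{q(\mathbf{Y})}[\mathbf{y}_i\mathbf{y}_i^T]=\boldsymbol{\mu}_i\boldsymbol{\mu}_i^T+\mathbf{\Sigma}$ gives $\mathbf{S}_k^{-1}=\alpha^2\mathbf{I}_d + \tfrac{u_k^2}{\sigma^2}\sum_i(\boldsymbol{\mu}_i\boldsymbol{\mu}_i^T+\mathbf{\Sigma})$ and the matching $\mathbf{m}_k$; rewriting $\sum_i\boldsymbol{\mu}_i\boldsymbol{\mu}_i^T=\boldsymbol{\mathcal{M}}^T\boldsymbol{\mathcal{M}}$ and $\sum_i\mathbf{\Sigma}=n\mathbf{\Sigma}$ yields the stated expressions.

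The computations are routine Gaussian algebra; the main thing to watch is the index bookkeeping — in particular, correctly tracking how $\mathbf{U}$ acts row-wise so that the quadratic forms $\mathbf{W}^T\mathbf{U}^2\mathbf{W}$ and $\sum_k u_k^2(\cdots)$ come out with the right powers of $u_k$, and verifying that the two updates are mutually consistent (each precision involves moments of the \emph{other} factor), so that the stated pair is genuinely the stationary point of the coordinate-ascent scheme rather than a single isolated update.
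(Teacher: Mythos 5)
Your proposal is correct and takes essentially the same route as the paper's proof in Appendix~D: the standard mean-field stationarity condition $\ln q^*(\cdot)=\mathbb{E}[\ln p(\mathbf{X},\mathbf{Y},\mathbf{W}|\boldsymbol{\theta})]+\textup{const}$, the induced factorizations over the observations $\mathbf{y}_i$ and over the rows $\mathbf{w}_k$, and Gaussian completion of the square in each factor. If anything, you spell out the moment computations (e.g.\ $\mathbb{E}_{q(\mathbf{W})}[\mathbf{W}^T\mathbf{U}^2\mathbf{W}]=\mathbf{M}^T\mathbf{U}^2\mathbf{M}+\sum_k u_k^2\mathbf{S}_k$) more explicitly than the paper does, which simply asserts that the quadratic forms lead to the stated expressions.
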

It is worth noticing that two factorizations arise naturally. The four equations of Proposition \eqref{prop:estep} (proved in Appendix D) will constitute the E-step of the VEM algorithm used to minimized the free energy.

We can now compute the negative free energy which will be maximized during the M-step.

\begin{proposition}
Up to unnecessary additive constants, the negative free energy is given by
\begin{multline}-\mathcal{F}_q(\mathbf{x_1},...\mathbf{x_n}|\boldsymbol{\theta})=-np\ln \sigma + dp \ln \alpha - \frac{1}{2\sigma^2}\Tr (\mathbf{X}^T\mathbf{X})- \frac{1}{2\sigma^2}\sum_{k=1}^pu_k^2\Tr[(n \boldsymbol{\Sigma}+\boldsymbol{\mathcal{M}}^T\boldsymbol{\mathcal{M}})(\boldsymbol{S}_k+\mathbf{m}_k\mathbf{m}_k^T)]\\+ \frac{1}{\sigma^2}\sum_{i=1}^n\mathbf{x}_i^T\mathbf{UM}\boldsymbol{\mu}_i   + \sum_{k=1}^p -\frac{\alpha^2}{2}\Tr(\mathbf{S}_k+\mathbf{m}_k\mathbf{m}_k^T)-\frac{1}{2}\sum_{i=1}^n \Tr(\boldsymbol{\Sigma}+\boldsymbol{\mu}_i\boldsymbol{\mu}_i^T) +\frac{n}{2}\ln |\mathbf{\Sigma}|+\frac{1}{2}\sum_{k=1}^p\ln |\mathbf{S}_k|. 
\end{multline}
\end{proposition}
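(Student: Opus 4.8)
The plan is to expand the negative free energy $-\mathcal{F}_q = \mathbb{E}_q[\ln p(\mathbf{X},\mathbf{Y},\mathbf{W}|\boldsymbol{\theta})] + H(q)$ piece by piece, taking each expectation under the factorized variational distribution $q(\mathbf{Y})q(\mathbf{W})$ of Proposition~\ref{prop:estep}. First I would use the conditional independence structure of the relaxed model~\eqref{relaxedmodel} to factor the log-joint as $\ln p(\mathbf{X}|\mathbf{Y},\mathbf{W},\boldsymbol{\theta}) + \ln p(\mathbf{Y}) + \ln p(\mathbf{W})$, where the three factors are respectively a product of Gaussians $\mathcal{N}(\mathbf{x}_i|\mathbf{U}\mathbf{W}\mathbf{y}_i,\sigma^2\mathbf{I}_p)$, the standard Gaussian prior on the $\mathbf{y}_i$, and the isotropic prior $p(\mathbf{W})=\prod_{i,j}\mathcal{N}(w_{ij}|0,1/\alpha^2)$. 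Writing out these log-densities immediately yields the leading terms $-np\ln\sigma$ and $dp\ln\alpha$ coming from the normalizing constants, while the remaining $2\pi$-normalizers (such as $-\tfrac{np}{2}\ln(2\pi)$ and $-\tfrac{pd}{2}\ln(2\pi)$) are discarded, as they depend on neither $\boldsymbol{\theta}$ nor the variational parameters.

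Next I would evaluate the expectations of the quadratic forms using the moments supplied by $q$, namely $\mathbb{E}_q[\mathbf{y}_i]=\boldsymbol{\mu}_i$, $\mathbb{E}_q[\mathbf{y}_i\mathbf{y}_i^T]=\boldsymbol{\Sigma}+\boldsymbol{\mu}_i\boldsymbol{\mu}_i^T$, $\mathbb{E}_q[\mathbf{W}]=\mathbf{M}$ and $\mathbb{E}_q[\mathbf{w}_k\mathbf{w}_k^T]=\mathbf{S}_k+\mathbf{m}_k\mathbf{m}_k^T$, together with the independence of $\mathbf{Y}$ and $\mathbf{W}$ under $q$. Expanding $-\tfrac{1}{2\sigma^2}\mathbb{E}_q\|\mathbf{x}_i-\mathbf{U}\mathbf{W}\mathbf{y}_i\|_2^2$ produces three contributions: the data term $-\tfrac{1}{2\sigma^2}\Tr(\mathbf{X}^T\mathbf{X})$; the linear term $\tfrac{1}{\sigma^2}\sum_i\mathbf{x}_i^T\mathbf{U}\mathbf{M}\boldsymbol{\mu}_i$, obtained from $\mathbb{E}_q[\mathbf{W}\mathbf{y}_i]=\mathbf{M}\boldsymbol{\mu}_i$ by independence; and the purely quadratic term treated below. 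The two prior log-densities contribute $-\tfrac{1}{2}\sum_i\Tr(\boldsymbol{\Sigma}+\boldsymbol{\mu}_i\boldsymbol{\mu}_i^T)$ from $\mathbb{E}_q\|\mathbf{y}_i\|_2^2$ and $-\tfrac{\alpha^2}{2}\sum_k\Tr(\mathbf{S}_k+\mathbf{m}_k\mathbf{m}_k^T)$ from $\mathbb{E}_q\|\mathbf{W}\|_F^2=\sum_k\mathbb{E}_q\|\mathbf{w}_k\|_2^2$.

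The main bookkeeping obstacle is the purely quadratic term $\mathbb{E}_q[\mathbf{y}_i^T\mathbf{W}^T\mathbf{U}^2\mathbf{W}\mathbf{y}_i]$, where both the $\mathbf{W}$-expectation and the $\mathbf{y}$-expectation enter and must be collapsed into a compact form. I would handle it by first writing $\mathbf{W}^T\mathbf{U}^2\mathbf{W}=\sum_{k=1}^p u_k^2\,\mathbf{w}_k\mathbf{w}_k^T$ in terms of the rows $\mathbf{w}_k$ of $\mathbf{W}$, so that the expectation over $\mathbf{W}$ equals $\sum_k u_k^2(\mathbf{S}_k+\mathbf{m}_k\mathbf{m}_k^T)$; then applying the trace identity $\mathbb{E}_q[\mathbf{y}_i^T\mathbf{B}\mathbf{y}_i]=\Tr[\mathbf{B}(\boldsymbol{\Sigma}+\boldsymbol{\mu}_i\boldsymbol{\mu}_i^T)]$ and summing over $i$. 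The identity $\sum_{i=1}^n(\boldsymbol{\Sigma}+\boldsymbol{\mu}_i\boldsymbol{\mu}_i^T)=n\boldsymbol{\Sigma}+\boldsymbol{\mathcal{M}}^T\boldsymbol{\mathcal{M}}$ then folds the double sum into $-\tfrac{1}{2\sigma^2}\sum_k u_k^2\Tr[(n\boldsymbol{\Sigma}+\boldsymbol{\mathcal{M}}^T\boldsymbol{\mathcal{M}})(\mathbf{S}_k+\mathbf{m}_k\mathbf{m}_k^T)]$, exactly the term appearing in the statement. The only real subtlety throughout is keeping straight the row-versus-column conventions relating $\mathbf{W}$, $\mathbf{M}$ and the individual $\mathbf{w}_k$, $\mathbf{m}_k$.

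Finally I would add the entropy $H(q)=H(q(\mathbf{Y}))+H(q(\mathbf{W}))$, invoking the Gaussian differential entropy formula $\tfrac{d}{2}\ln(2\pi e)+\tfrac{1}{2}\ln|\boldsymbol{\Sigma}|$ for each factor. Summing over the $n$ factors of $q(\mathbf{Y})$ and the $p$ factors of $q(\mathbf{W})$, and discarding the constant $\ln(2\pi e)$ contributions, produces the remaining terms $\tfrac{n}{2}\ln|\boldsymbol{\Sigma}|+\tfrac{1}{2}\sum_k\ln|\mathbf{S}_k|$, which completes the expression. No analytic difficulty arises beyond the matrix algebra above, so collecting all retained terms gives the claimed formula up to additive constants.
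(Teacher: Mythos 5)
Your proof is correct and follows essentially the same route as the paper's Appendix E: expanding $\mathbb{E}_q[\ln p(\mathbf{X},\mathbf{Y},\mathbf{W}|\boldsymbol{\theta})]+H(q)$ term by term under the mean-field posterior, using the Gaussian moments $\mathbb{E}_q[\mathbf{y}_i\mathbf{y}_i^T]=\boldsymbol{\Sigma}+\boldsymbol{\mu}_i\boldsymbol{\mu}_i^T$ and $\mathbb{E}_q[\mathbf{w}_k\mathbf{w}_k^T]=\mathbf{S}_k+\mathbf{m}_k\mathbf{m}_k^T$ together with the row decomposition $\mathbf{W}^T\mathbf{U}^2\mathbf{W}=\sum_k u_k^2\mathbf{w}_k\mathbf{w}_k^T$ and Gaussian entropies. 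Your only departure is to make explicit the final collapsing identity $\sum_{i=1}^n(\boldsymbol{\Sigma}+\boldsymbol{\mu}_i\boldsymbol{\mu}_i^T)=n\boldsymbol{\Sigma}+\boldsymbol{\mathcal{M}}^T\boldsymbol{\mathcal{M}}$, which the paper leaves implicit in passing from its double-sum display to the stated formula.
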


Minimizing the free energy leads to the following M-step updates
\begin{equation} \label{eq:Malpha}
\alpha^*=\left(\frac{1}{dp}\sum_{k=1}^p\Tr(\mathbf{S}_k+\mathbf{m}_k\mathbf{m}_k^T)\right)^{-1/2},
\end{equation}
\begin{equation} \label{eq:Msigma}
\sigma^*=\sqrt{\frac{\Tr (\mathbf{X}\mathbf{X}^T+ \mathbf{X}\mathbf{U}\mathbf{M}\boldsymbol{\mathcal{M}})}{np} +\frac{1}{np}\sum_{i=1}^n\sum_{k=1}^pu_k^2\Tr[(\boldsymbol{\Sigma}+\boldsymbol{\mu}_i\boldsymbol{\mu}_i^T)(\boldsymbol{S}_k+\mathbf{m}_i\mathbf{m}_i^T)]},
\end{equation} and, for $k \in \{1,...,p\}$,
\begin{equation} \label{eq:Mu}
u_k^*=\argmin_{u \in [0,1]} \frac{u^2}{2\sigma^2} \sum_{i=1}^n\Tr[(\boldsymbol{\Sigma}+\boldsymbol{\mu}_i\boldsymbol{\mu}_i^T)(\boldsymbol{S}_k+\mathbf{m}_i\mathbf{m}_i^T)]-u\sum_{i=1}^nx_{i,k}\mathbf{m}_k^T\boldsymbol{\mu}_i.
\end{equation}
Note that the objective function of the optimization problem~\eqref{eq:Mu} is simply a univariate polynomial. 

\subsection{The GSPPCA algorithm}

Once the VEM algorithm has converged, the continuous vector $\mathbf{u}$ still needs to be transformed into a binary one. To do so, the following simple procedure, summarized in Algorithm~\ref{algofinal}, is considered: \begin{itemize} \item a family of $p$ nested models is built using the order of the coefficients of $\mathbf{u}$ as a way of ranking the variables. Specifically, for each $k\leq p$, the $k$-th element of this family is the binary vector $\mathbf{v}^{(k)}$ such that the $k$ top coefficients of $\mathbf{u}$ are set to 1 and the others to 0.
\item the marginal likelihood $\mathcal{L}$ of the non-relaxed model (computed using the formula of Theorem 3) is then maximized over this family of models.
\item the model $\mathbf{v}$  with the largest marginal likelihood is kept.
\end{itemize}
Once the model is estimated, the globally sparse principal components of $\mathbf{X}$ can be computed by simply performing PCA on $\mathbf{X}_\mathbf{v}$. This type of post-processing is similar to the \emph{variational renormalization} introduced by \cite{mog2005}. In the case of local sparsity, variational renormalization can be achieved using an alternating maximization scheme \citep{journee2010}. However, the global sparsity structure greatly simplifies this procedure by reducing it to performing PCA on the relevant variables.

\begin{algorithm}[t] \label{algofinal}
\caption{GSPPCA algorithm for unsupervised variable selection}         

\KwIn{data matrix $\mathbf{X} \in \mathbb{R}^{n\times p}$, dimension of the latent space $d \in \mathbb{N}^*$ }

\KwOut{sparsity pattern $\mathbf{v} \in \{0,1\}^p$}

\BlankLine

\BlankLine

// VEM algorithm to infer the path of models \\
Initialize $\mathbf{u},\alpha,\sigma,\boldsymbol{\mu}_1,...,\boldsymbol{\mu}_n,\mathbf{m}_1,...,\mathbf{m}_p,\mathbf{S}_1,...,\mathbf{S}_p$ and $\boldsymbol{\Sigma}$    \;

   \Repeat{convergence of the variational free energy}{
   E-step from Proposition \ref{prop:estep}\;
   M-step from equations \eqref{eq:Malpha},\eqref{eq:Msigma},\eqref{eq:Mu}\;
 }
  
  \BlankLine
  \BlankLine

   // Model selection using the {exact} marginal likelihood   \\
  Compute $\sigma_1$ \;
  \For{k = 1..p}{
  Compute $\mathbf{v}^{(k)}$\;
  Find $ \alpha_k = \argmax_{\alpha>0} \{ \alpha \mapsto \mathcal{L}(\mathbf{X},\mathbf{v}^{(k)},\alpha,\sigma_1) \}$ using gradient ascent \;
  }
  
  \BlankLine
\BlankLine

  $ q=\argmax_{1\leq k\leq p} \mathcal{L}(\mathbf{X},\mathbf{v}^{(k)},\alpha_k,\sigma_1)$ \;
   $\mathbf{v}=\mathbf{v}^{(q)}$ \;
   
\label{algo}

\end{algorithm}

\subsection{Links with other sparsity-inducing Bayesian procedures}

\paragraph{Spike-and-slab models} Model \eqref{modelideal} may be rewritten $\mathbf{x}=\tilde{\mathbf{W}}\mathbf{y}+\boldsymbol{\varepsilon}$ where $\tilde{\mathbf{W}}=\mathbf{VW}$. The prior distribution for the parameter $\tilde{\mathbf{W}}$ is similar to the spike-and-slab prior introduced by \cite{mitchell88} in a linear regression framework. Indeed, each coefficient $\tilde w_{ij}$  follows \emph{a priori} either a Dirac distribution with mass at zero (if $v_i=0$) which is usually called the \emph{spike} or a Gaussian distribution with variance $1/\alpha^2$ (if $v_i=1$) which is usually called the \emph{slab}. However,  contrary  to  standard spike-and-slab  models which would assume  a product of  Bernoulli prior distributions over  $\mathbf{v}$,  we  see $\mathbf{v}$ here as a deterministic parameter to be inferred from the data. It is worth noticing that spike-and-slab priors have already been applied to locally sparse PCA by \cite{lazaro2011} and \cite{mohamed2012}.

\paragraph{Automatic relevance determination} Introduced in the context of feedforward neural networks \citep{mackay1994,neal1996}, automatic relevance determination (ARD) is a popular empirical Bayes procedure to induce sparsity. ARD was applied to Bayesian PCA models together with VEM algorithms in order to obtain automatic dimensionality selection \citep{bishopvar} of local sparsity \citep{archambeau2009}. In order to obtain global sparsity, ARD may be built using Model~\eqref{modelePPCA} together with Gaussian priors $\mathbf{w}_i\sim \mathcal{N}(0,a_i\mathbf{I}_d)$ for  $i \in \{1,...,p\}$. Similarly to \cite{tipping2001}, maximizing the marginal likelihood would discard irrelevant variables by leading several variance parameters $a_i$ to vanish. Interestingly, this model is somehow related to the relaxed GSPPCA model. Indeed the relaxed model~\eqref{relaxedmodel} assumes that the $i$-th line of the loading matrix $\mathbf{UW}$ follows \emph{a priori} a  $\mathcal{N}(0,u_i^2/\alpha^2\mathbf{I}_d)$ distribution.
The relaxed model will consequently inherit the good properties of ARD -- listed for example by \cite{wipf2011}. However, similarly to \cite{latouche}, using the exact marginal likelihood to eventually obtain a sparse solution will avoid many classical drawbacks of ARD. First, as pointed out by~\cite{wipf2008}, convergences of EM algorithms are extremely slow in the case of the ARD models. However, with our approach, since we only need the \textit{ordering} of the coefficients of $\mathbf{u}$, we do not have to wait for the complete convergence of this parameter. In practice, in all the experiments that we carried out, we only had to perform less than a few hundreds of iterations of the algorithm to obtain convergence of the free energy in order to perform variable selection. It is worth mentioning that the fact that the objective function converges faster than the parameters of the model is a quite general property of EM algorithms \citep{xu1996}. Our procedure also avoids the lack of flexibility of ARD by computing posterior probabilities of models rather than simply giving an estimate of the best sparse model. {Combined with a greedy technique similar to Occam's window \citep{madigan1994}}, this feature could allow for example to perform Bayesian model averaging, which is not possible with ARD. Eventually, in the context of Bayesian PCA, ARD models such as the ones of \cite{bishop1999,bishopvar} or \cite{archambeau2009} have to rely on approximations of the marginal likelihood while we use an exact expression.

\subsection{Computational considerations}
\label{ss:comput}
\paragraph{Intrinsic dimension estimation} Since model \eqref{modelideal} is a particular instance of PPCA, any intrinsic dimension estimator for PCA can be applied to estimate beforehand the intrinsic dimension $d$. Although the problem of finding $d$ is of critical importance, we assume in this work that a reasonable choice of dimension has already been made by the practitioner. While it could be tempting to use the exact noiseless marginal likelihood to select $d$, the close relationship existing between the noise level and $d$ in PPCA \citep{tipping1999,nakajima2011} suggests that loosing the noise information is likely to be prejudicial for intrinsic dimension estimation.
{
\paragraph{Initialization strategies for the VEM algorithm} Regarding the initialization of the relaxed model parameter $\mathbf{u}$, we chose to initialize all its coefficients to one. This allows to avoid premature vanishing of these coefficients which is a common drawback of ARD-like techniques \citep{wipf2008}. The noise standard error can be simply initialized using any classical PPCA noise estimator (cf. subsection \ref{closedform}). Similarly to \cite{latouche}, the slab precision parameter $\alpha$ controls the sparsity of the VEM solution and a too small initial value is likely to lead to a too sparse solution such as the useless local optimum $\mathbf{u} = 0$. Following \cite{biernacki2003}, we chose to perform short VEM runs (with less than 5 iterations) on a small grid (typically {$\alpha \in \{0.1,1,10\}$}) and to select the value of $\alpha$ that led to the lowest free energy. The posterior means of the PCA loadings $\mathbf{m}_1,...,\mathbf{m}_p$ and of the corresponding scores $\boldsymbol{\mu}_1,...,\boldsymbol{\mu}_n$ can be initialized using the singular vectors of $\mathbf{X}$. If the size of the data forbids to perform this SVD, using random standard Gaussian coefficients as starting points does not significantly alter the results. Finally, the initial values chosen for the posterior covariance matrices are $\boldsymbol{\Sigma} = \mathbf{I}_d$ and $\mathbf{S}_1=...=\mathbf{S}_p=\alpha^{-2}\mathbf{I}_d$.}
\paragraph{Computational cost of VEM iterations}
Thanks to the factorizations that arised naturally during variational inference, the cost of each VEM iteration is of order $O(pnd^3)$ which is linear \emph{both in sample size and dimensionality} and therefore particularly suitable for high-dimensional inference.
\paragraph{Estimation of the noise variance} As mentioned in seubsection \ref{closedform}, the standard error $\sigma_1$ of irrelevant predictors can be estimated using any regular PPCA estimator. Specifically, three important estimators are considered: the maximum likelihood estimator \citep{tipping1999}, its unbiased correction \citep{passemier2015}, or simply the median of the variances of all features \citep{johnstone2009}. Since the ML estimator is known to be biased in the high-dimensional regime, it is usually preferable to use its bias-corrected version. Both of these estimators can also be computed using the singular value decomposition (SVD) of $\mathbf{X}$. Note that since the median estimator does not need to perform this decomposition, it is therefore more suitable for large-scale inference.
\paragraph{Large scale inference} In the GSPPCA algorithm, SVD is used twice. Indeed, the top $d$ singular vectors can be used to initiate the VEM algorithm and the $p-d$ smallest singular values can be used to estimate the noise variance (both as a VEM starting point for $\sigma$ and as an estimator for $\sigma_1$). This can be done efficiently using a \emph{truncated SVD algorithm}. We chose specifically the R interface \citep{rspectra} of the Spectra\footnote{http://yixuan.cos.name/spectra/index.html} C++ library. However, for very large scale problems, even a fast truncated SVD algorithm appears computationally prohibitive. To tackle this issue, we offer two alternatives. First, the covariance matrices initialized using the eigenvectors can be initialized using random standard Gaussian coefficients. Moreover, following \cite{johnstone2009}, the noise variance can be estimated using the median of the variable variances.
This leads to a "SVD-free" version of the GSPPCA algorithm suitable for very large scale problems.
{
\paragraph{Model selection speedup} The model selection step of the GSPPCA algorithm requires to perform $p$ univariate gradient ascents, which can be computationally expensive when $p$ is large. A simple way to reduce the number of gradient ascents is to rely on the links between our relaxed model and ARD. Specifically, we can discard \emph{before} the model selection step all the variables corresponding to the subset $\{i \in \{1,...,p\} | u_i=0 \}$ where $\mathbf{u}$ is the relaxed model parameter obtained after convergence of the VEM algorithm. When $\mathbf{u}$ is sparse, this will bring about a substantial speedup. Notice that, since ARD is known to converge slowly, $\mathbf{u}$ is unlikely to be sparse enough and the model selection step is still necessary.}
\paragraph{Evaluation of Bessel functions}
The modified Bessel function of the second kind, which is used to compute the exact marginal likelihood and it gradient with respect to $\alpha$, can be delicate to compute as soon as its order or its argument is large. In our experiments, we tackled this issue by using an asymptotic expansion based on Debye polynomials \cite[formula 9.8.7]{AS}. This is in particular implemented in the R package \texttt{Bessel} \citep{maechler}. We found this approximation to be extremely accurate in all the experiments that we carried out.

\section{Numerical simulations}

This section aims at highlighting the specific features and abilities of the proposed GSPPCA approach on simulated and real data sets.

\subsection{An introductory example}
We consider here a simple introductory example to illustrate the proposed combination between a relaxed VEM algorithm and the closed-form expression of the marginal likelihood. For this experiment, $n=50$ observations are simulated according to \eqref{modelideal} with $p=30$, $d=5$ and $q=10$. Each coefficient of $\mathbf{W}$ is drawn at random according to a standard Gaussian distribution and the noise variance is equal to $0.1$. Figure~1 presents the results of GSPPCA on this toy data set. The left panel presents in dark blue the coefficients of the estimated $\mathbf{u}$ obtained after running the VEM algorithm (sorted in decreasing order) and the corresponding true values of $\mathbf{v}$ (pale blue points) used in the simulations. The right panel shows the values of evidence computed on the family of models inferred by the order of the coefficients of $\mathbf{u}$. On this simple example, $\mathbf{u}$ captures the true ranking of the variables and the model with the largest evidence is actually the true one.

\begin{figure}[t]  \caption{Variable selection with GSPPCA on the introductory example.}
\centering

   \includegraphics[width=0.95\columnwidth]{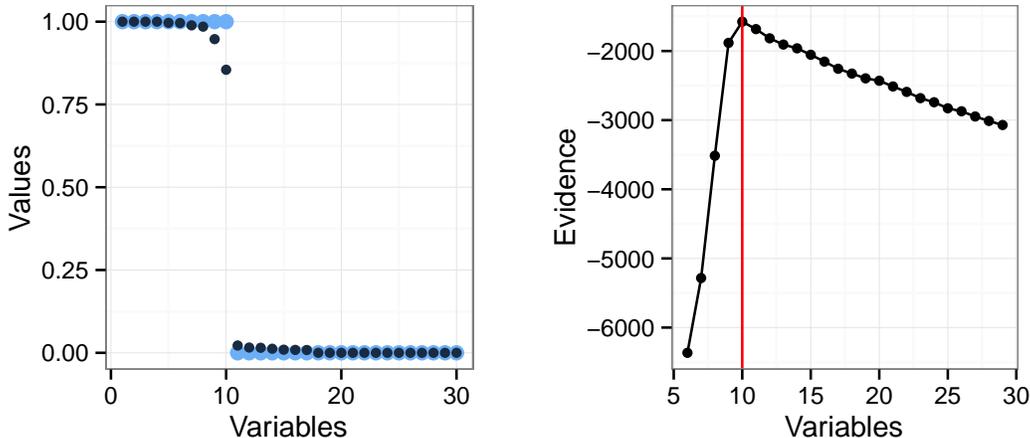}
\end{figure}

\subsection{Range of the noiseless assumption}
\label{subsec:range}
In all the experiments that we carried out, since the noiseless PPCA model is not a true generative $p$-dimensional model (the random variable $\tilde{\mathbf{x}}$ belongs to a strict subspace of $\mathbb{R}^p$), we chose not to use it to generate data in our experiments. We rather chose the more realistic and natural Model~\eqref{modelideal}. Since this model includes a nonzero noise, it is important to know the limits of the noiseless assumption.

We therefore simulated two scenarios according to Model~\eqref{modelideal}: {a first one with $n=40$ observations and a second one with $n=200$. In both scenarios, $p=200$}, $d=10$, $q=20$, and each coefficient of $\mathbf{W}$ is drawn  according to a standard Gaussian distribution. The sparsity pattern chosen is simply
\begin{equation} \label{eq:pattern} \mathbf{v}=(\overbrace{1,...,1}^{\textup{20 times}},\overbrace{0,...,0}^{\textup{180 times}})^T. \end{equation}
In this simple simulation scheme, the signal-to-noise ratio (SNR) may be defined as $\label{SNR} \textup{SNR}=\frac{1}{p\sigma^2}\mathbb{E}_\mathbf{W}[(\mathbf{VW})^T\mathbf{VW}]p\sigma^2=\frac{dq}{p\sigma^2}.$
We chose a linear grid of $20$ SNR ranging from {$0.1$ (most difficult scenario) to $3$} (easiest scenario) and generated $100$ datasets for each noise level. To evaluate the quality of the variable selection, we computed the F-score between $\hat{\mathbf{v}}$ and $\mathbf{v}$ on 100 runs. We recall that the F-score is the harmonic mean of precision and recall, and is closer to $1$ when the selection is faithful. Unsurprisingly, when the SNR gets close to zero, the quality of the variable selection diminishes. However, GSPPCA appears to be quite robust to noise, even though the data are not generated according to the underlying noiseless model. {Indeed, even in the case where $n=40$, we observe an almost perfect recovery as long as SNR>0.5.}

\begin{figure}[t]\caption{Median, first and third quartiles of the F-score for the experiment of subsection 3.2, based on 100 runs}
\centering
  \includegraphics[width=0.95\columnwidth]{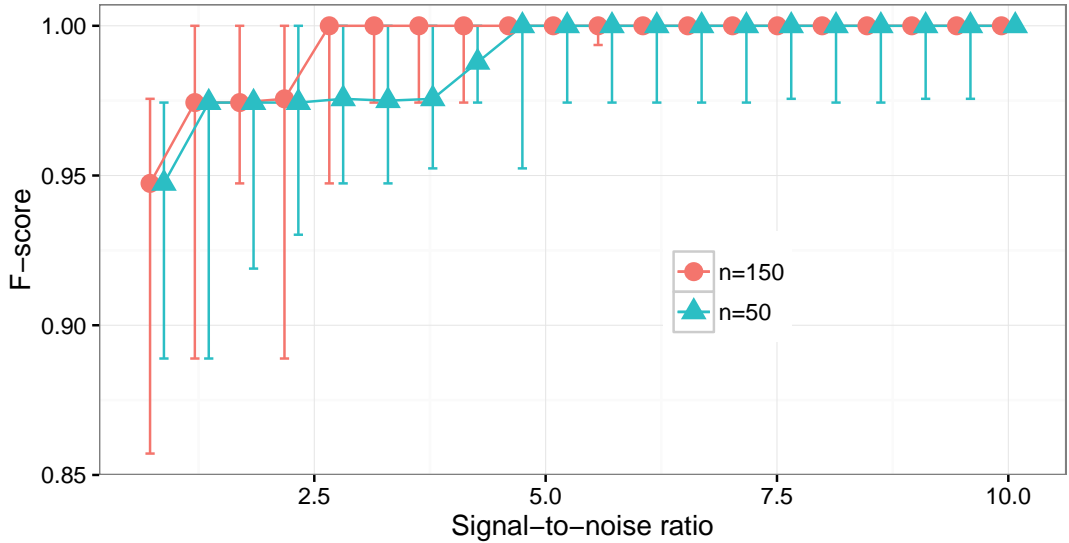}
\end{figure}

\subsection{Model selection}
\label{sub:modelselection}
In this subsection, we compare the model selection accuracies of two global methods -- GSPPCA, SSPCA \citep{jenatton2009} --  and a local one -- SPCA  \citep{zou2006}.

\paragraph{Simulation setup}
While the simple simulation setup of Subsection \ref{subsec:range} conveniently allowed to compute the SNR in closed formed in order to assess the range of the noiseless assumption, we introduce here a more realistic scheme by considering a finer correlation structure as well as a non-Gaussian noise. Specifically, first we generate $n$ i.i.d observations $(\mathbf{z}_1,...,\mathbf{z}_n)$ following multivariate normal distribution $\mathcal{N}(0,\mathbf{R})$ where $\mathbf{R}=\diag(\mathbf{R}_1,...,\mathbf{R}_4)$ is a 4-blocks diagonal matrix where $R_\ell$ is such that $r_{\ell ii} = 0.3$ and  $r_{\ell ij} = \rho$ for $i,j=1,\dots,p/4$ and $i\neq j$.
Then, a globally sparse PCA model is obtained as followed. First, PPCA is performed on the sample $(\mathbf{z}_1,...,\mathbf{z}_n)$, which leads to a non-sparse ML estimate  $\mathbf{W}_{\textup{ML}}$ for the loading matrix. Then, given a sparsity pattern $\mathbf{v} \in \{0,1\}^p$ and denoting $\mathbf{V}=\diag(\mathbf{v})$ as before, the loading matrix matrix is "globally sparsified" by considering $\mathbf{VW}_{\textup{ML}}$. The final observations are eventually generated according to the non-noiseless model
\begin{equation}
\label{eq:simu}
\forall i \leq n, \; \; \mathbf{x}_i = \mathbf{V} \mathbf{W}_{\textup{ML}} \mathbf{y}_i + \boldsymbol{\varepsilon}.
\end{equation}
The simple sparsity pattern \eqref{eq:pattern} is kept and the vectors $\mathbf{y_1},...,\mathbf{y}_n$ are standard Gaussian as in regular PPCA. Regarding the noise term $\boldsymbol{\varepsilon}$, we consider two scenarios. A first one with Gaussian noise and a second one with Laplacian noise, both centered with unit variance.
We choose $p=200$, $d=10$, $q=20$ and consider five cases for the sample size: $n=p/5$, $p/4$, $n=\lfloor p/3\rfloor$, $n=p/2$ and $n=p$. More classical $n>p$ cases are not presented here since regular PCA is known to perform well in this context and variable selection thus may not be of great use \citep{johnstone2009}. Each experiment was repeated 50 times.

\paragraph{Model selection criteria} Regarding SSPCA, we used the Matlab code available at the main author's webpage and chose the tuning parameter using $5$-fold cross-validation on the reconstruction error. We constrained the algorithm in order to obtain globally sparse solutions. For SPCA, we used the \texttt{elasticnet} R package and an \emph{ad-hoc} method by selecting enough variables to explain $99\%$ of the total variance. We also tried to apply another globally sparse algorithm, vsnPCA-$\ell_0$ from \cite{ulfarsson2011}. However, their use of the Bayesian information criterion (BIC) led to selecting very few variables. This is not very surprising: since BIC is an asymptotic sparsity criterion, it is thus likely to perform poorly when $p$ is larger than $n$.

\paragraph{Results}
Tables \ref{tab:gauss} and \ref{tab:laplace} reports the mean and standard error of the F-score for the experiments described is this subsection. The two globally sparse methods vastly outperform SPCA, which is unable to identify the particular structure of the data. {When $p$ is larger than $n/2$, both globally sparse algorithms perform very well, GSPPCA being slightly better in the Gaussian noise case. It is not surprising to see SSPCA adapt efficiently to Laplacian noise because cross-validation is a model-free technique and is more likely to outperform model-based techniques when the data is not generated according to the model distribution.
However, when $n$ is smaller than $p/2$, GSPPCA significantly outperforms SSPCA in both noise scenarios.} This reminds the fact that, is many $p\gg n$ situations, Bayesian model selection empirically outperforms $\ell_1$-based methods \citep{celeux2012,latouche}.
\begin{table}[t!]
\centering
\caption{F-score$\times100$ for the model selection experiment of subsection \ref{sub:modelselection} with Gaussian noise}
\label{tab:gauss}
\begin{tabular}{l|ccccc}
      &$n=p/5$   & $n=p/4$   &    $n=\lfloor p/3\rfloor$   & $n=p/2$           & $n=p$  \\ \hline
SPCA   &$20.7\pm0.7$ & $21.2\pm0.7$  & $21.5\pm0.7$ & $21.7\pm0.5$  & $25.2\pm2.1$  \\ \hline
SSPCA &$66.7\pm21.4$ & $71.5\pm20$ & $86.7\pm14.2$ & $ 95.6\pm8.9$ & $ 98.2\pm7.2$  \\ \hline
GSPPCA   & $\bf 86.8\pm7.06$ & $ \bf 93.9\pm3.66$  & $ \bf97.2\pm2.55$ & $\bf 99.2\pm1.4$  & $\bf 1\pm0$ 
\end{tabular}
\end{table}
\begin{table}[t!]
\centering
\caption{F-score$\times 100$ for the model selection experiment of subsection \ref{sub:modelselection} with Laplacian noise}
\label{tab:laplace}
\begin{tabular}{l|ccccc}
       &$n=p/5$  & $n=p/4$   & $n=\lfloor p/3\rfloor$      & $n=p/2$           & $n=p$   \\ \hline
SPCA & $20.8\pm0.6$ & $21.3\pm0.6$  & $21.6\pm0.8$ & $21.8\pm0.6$  & $25.3\pm1.7$ \\ \hline
SSPCA &$60.6\pm22.4$ & $63.9\pm25.2$ & $82.7\pm18.1$  & $\bf 94.2\pm10.2$ & $ 97.4\pm9.5$  \\ \hline
GSPPCA  & $\bf 74.2\pm10$  & $\bf 77.6\pm9.09$  &$\bf 79.7\pm8.38$ & $ 88\pm5.95$  & $\bf 99.2\pm1.4$   
\end{tabular}
\end{table}

\subsection{Global versus local}

Here, we illustrate on real data sets how using GSPPCA instead of computing the leading sparse principal component for model selection can lead to selecting more relevant variables -- i.e variables that retain more variance or are more interpretable.

\paragraph{Explained variance} We consider the breast cancer data base from the \texttt{breastCancerVDX} R package \citep{VDX}, consisting  in expression levels of $p=5391$ genes for $n=344$ breast cancer patients. {More details regarding this data set -- including the preprocessing technique used -- are given in Appendix F.} Given a cardinality $q$, we applied three methods to select relevant genes: \begin{itemize}
\item we computed the first $q$-sparse principal component using SPCA \citep{zou2006}
\item we computed the support of the globally $q$-sparse subspace of dimension $d=10$ using GSPPCA and SSPCA
\end{itemize}
For each method, we projected the data onto a 10-dimensional globally $q$-sparse subspace using the sparsity pattern found by the algorithm and computed the percentage of explained variance using the criterion introduced by \cite{shen2008} -- for each method, we applied the post-processing technique of \cite{mog2005}. The results are plotted on Figure~\ref{fig:var}. It is important to notice that both global methods explain much more variance than SPCA. This fact is not surprising since the data is indeed projected onto a globally sparse subspace, but the significance of this variance gap highlights the fact that different dimensions lead to very different sparsity patterns. This means that projecting the data onto a single sparse axis is likely to lead to an important information loss (this fact is confirmed in section 5).
The variables selected by GSPPCA retain significantly more variance than the ones selected by SSPCA, and may consequently be of superior interest.
\begin{figure}[tb] \label{fig:var} \caption{Percentage of variance explained by the data projected onto a 10-dimensional globally sparse subspace}
\centering
   \includegraphics[width=0.95\columnwidth]{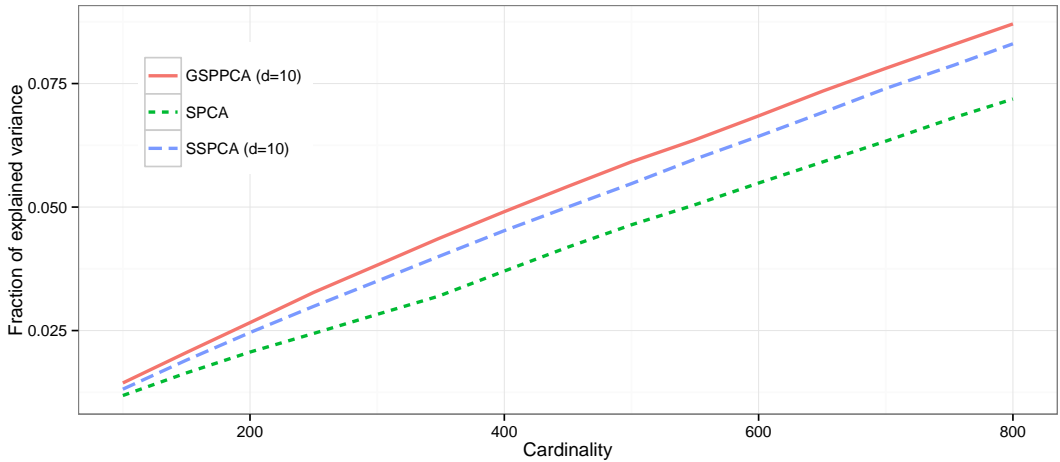}
\end{figure}

\paragraph{Interpretability} Inspired by \citet[section 8.2.3.1]{hastie2015}, we consider the problem of learning which features are relevant on three data sets of handwritten digits. We consider $n=500$ gray-scale images (with $p=758$ pixels) of handwritten sevens from three data sets introduced by \cite{larochelle2007}:
\begin{itemize}
\item \emph{mnist-basic} which is simply a subsample of sevens from the original MNIST data set,
\item \emph{mnist-back-rand} in which random backgrounds were inserted in the images. Each pixel value of the background was generated uniformly between 0 and 255,
\item \emph{mnist-back-image} in which random patches extracted
from a set of 20 grey-scale natural images were used as backgrounds for the
sevens.
\end{itemize}
On these three data sets, we apply SPCA (with $d=1$), SSPCA and GSPPCA (both with $d=100$) in order to select $q=200$ relevant pixels. On \emph{mnist-basic}, even if SPCA's result is a little bit more erratic than the two others, all selections are interpretable and we can easily recognize a seven. On \emph{mnist-back-rand} however, while the two globally sparse selections are still consistent, SPCA's pixels are more scattered and it is harder to recognize the shape of a seven. Eventually, on \emph{mnist-back-image}, GSPPCA's selection is less smooth but a seven can still be recognized, whereas SPCA appears to randomly select pixels \emph{almost everywhere but near the mean seven}. SSPCA seems to notice that the zone occupied by the upper bars of the sevens is of interest, but its selection does not appear interpretable.
\begin{table}[t]
\tiny
\centering
\caption{Variable selection of SPCA and GSPPCA for the three datasets of \cite{larochelle2007}, selected variables are in white}
\label{mnist}
\begin{tabular}{c c c c}
  & \emph{mnist-basic}                                                    & \emph{mnist-back-rand}                                                    & \emph{mnist-back-image}                                                    \\ \hline
Sample & \includegraphics[width=1.85cm,trim={5.9cm 2.6cm 4.9cm 2cm},clip=true]{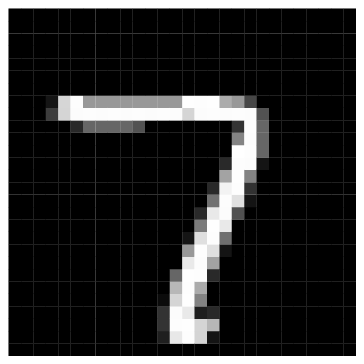} & \includegraphics[width=1.85cm,trim={5.9cm 2.6cm 4.9cm 2cm},clip=true]{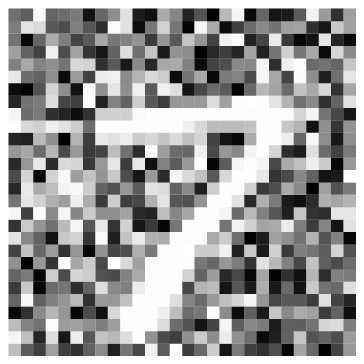} & \includegraphics[width=1.85cm,trim={5.9cm 2.6cm 4.9cm 2cm},clip=true]{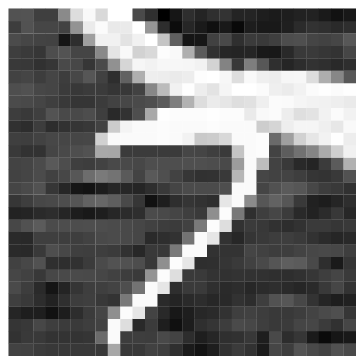} \\ \hline
SPCA & \includegraphics[width=1.85cm,trim={5.9cm 2.6cm 4.9cm 2cm},clip=true]{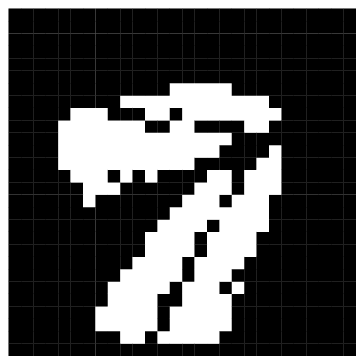} & \includegraphics[width=1.85cm,trim={5.9cm 2.6cm 4.9cm 2cm},clip=true]{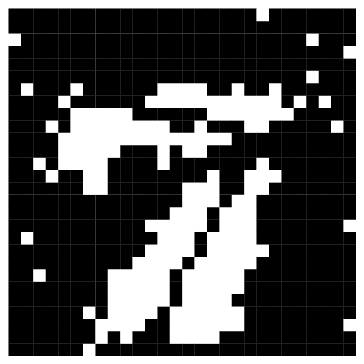} & \includegraphics[width=1.85cm,trim={5.9cm 2.6cm 4.9cm 2cm},clip=true]{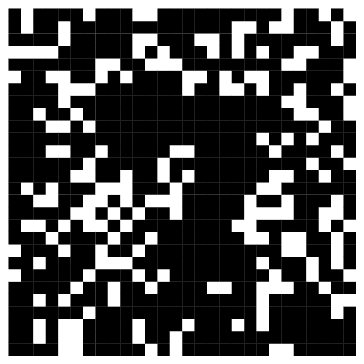} \\ \hline
SSPCA & \includegraphics[width=1.85cm,trim={5.9cm 2.6cm 4.9cm 2cm},clip=true]{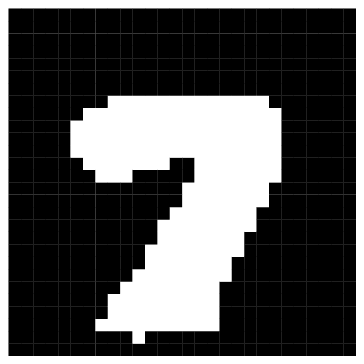} & \includegraphics[width=1.85cm,trim={5.9cm 2.6cm 4.9cm 2cm},clip=true]{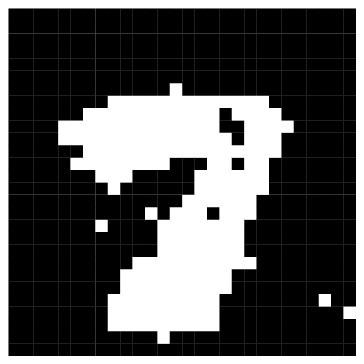} & \includegraphics[width=1.85cm,trim={5.9cm 2.6cm 4.9cm 2cm},clip=true]{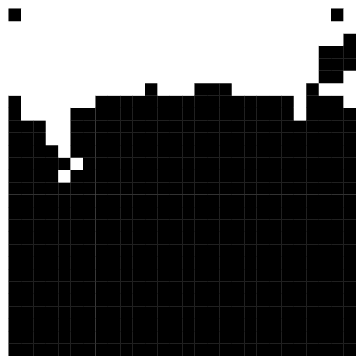} \\ \hline
GSPPCA & \includegraphics[width=1.85cm,trim={5.9cm 2.6cm 4.9cm 2cm},clip=true]{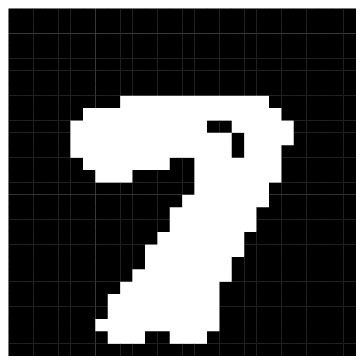} & \includegraphics[width=1.85cm,trim={5.9cm 2.6cm 4.9cm 2cm},clip=true]{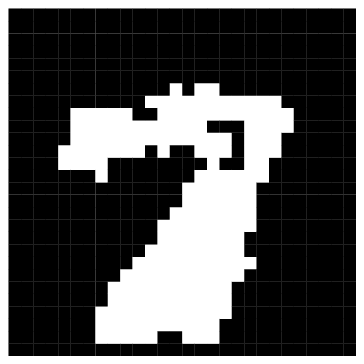} & \includegraphics[width=1.85cm,trim={5.9cm 2.6cm 4.9cm 2cm},clip=true]{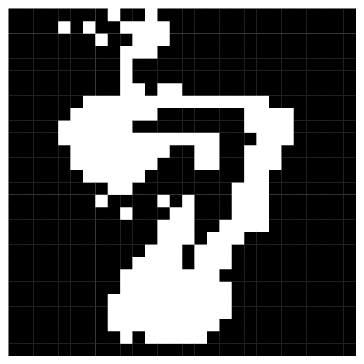} \\ 
\end{tabular}
\end{table}

\section{Application to signal denoising}

In this section, we focus on a first possible application of GSPPCA for signal
denoising through the sparsification of a wavelet decomposition. PCA
is indeed a popular way to denoise multivariate signals~\citep{aminghafari2006,johnstone2009}.
To illustrate the potential interest of GSPPCA in this context, we
consider hereafter two simulation scenarios, each using a specific
form of signal and wavelet. The simulation scenarios are as follows:
\begin{itemize}
\item Scenario A: it consists in a square wave signal with 6 states of different
lengths. The observed signal is sampled with a time step of $5\times10^{-3}$
with an additional Gaussian noise with zero mean and $0.2$ standard deviation. The Haar wavelet is used here for signal reconstruction.
\item Scenario B: the original signal is here a mixture of 4 Gaussian densities.
The observed signal is also sampled with a time step of $5\times10^{-3}$
with an additional Gaussian noise with zero mean and $0.2$ standard deviation. The Daubechies D8 wavelet is used here for signal reconstruction.
\end{itemize}
Figure~\ref{wav-2} presents the original signals and observed signals
for scenarios A and B.
In both cases, $n=100$ signals were sampled during the training phase and decomposed as $p=175$
wavelet coefficients. For signal denoising, GSPPCA is applied on the
$n\times p$ wavelet coefficient matrix to extract $d=10$ globally sparse
principal axes. Then, a new sampled signal is projected on those extracted
principal axes and back-projected in the original wavelet domain.
It is worth mentioning that the estimated value for $q=\left\Vert v\right\Vert _{0}$
is $17$ on scenario A and $15$ on scenario B. 

As an illustration, we plotted on Figure~\ref{wav-2} the denoising
results for newly sampled signals A and B with GSPPCA. We used the same projection-reconstruction protocol for PCA, thresholded PCA (PCA
loading smaller than $1\times10^{-3}$ are set to $0$) and SPCA ($\lambda$
is chosen such that $99\%$ of the PCA projected variance is conserved).
Denoising results obtained with those methods are also supplied on
Figure~\ref{wav-2}. First, on both signal A and
B, PCA achieves a very satisfying denoising and thus confirms his
validity in this context. One can also show that a simple thresholding
of the PCA loadings allows a clear denoising improvement and
turns out to be competitive with the one performed by SPCA. The SPCA
result is here somehow disappointing due to the fact that the sparsity
is not global and most wavelet levels stay active in the final reconstruction.
Finally, the global sparsity of GSPPCA retains only a few wavelet
levels and achieves here the best reconstruction in both scenarios.

Finally, Table~\ref{wav-4} presents the reconstruction error (sum
of squared errors) averaged on 50 test signal reconstructions, on the two
simulation scenarios. The results confirms the observations made on
Figure~\ref{wav-2}. GSPPCA achieves particularly
good performances on both scenarios and thus imposes itself as
a competitive tool for signal denoising. Moreover, the GSPPCA reconstruction uses fewer wavelet
levels and is therefore visually smoother.

\begin{figure}[p]
\begin{centering}
\includegraphics[width=0.87\columnwidth]{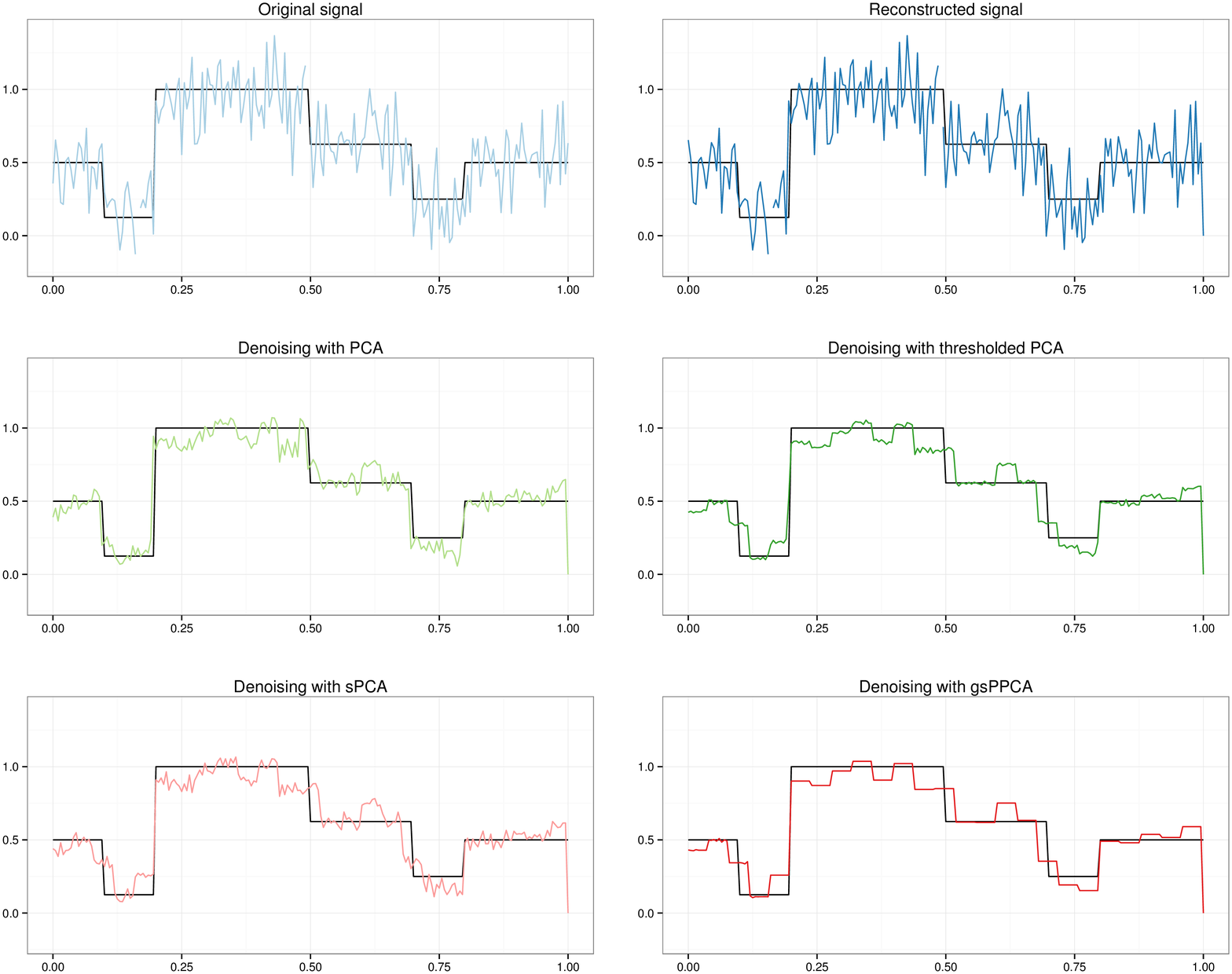}
\includegraphics[width=0.87\columnwidth]{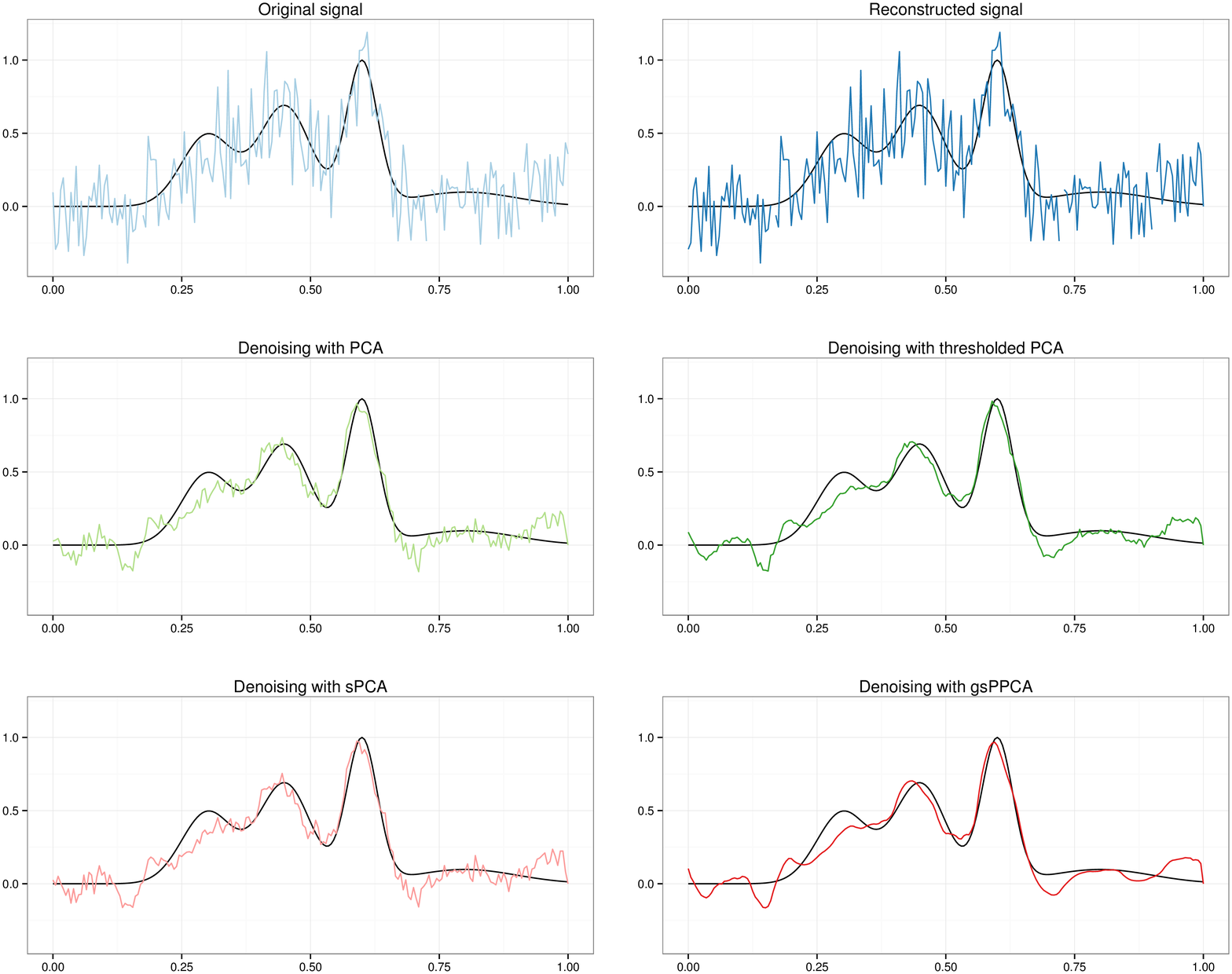}
\par\end{centering}

\protect\caption{\label{wav-2}Denoising results for signals A (top) and B (bottom) with PCA, thresholded PCA,
SPCA and GSPPCA.}

\end{figure}

\begin{table}
\begin{centering}
\begin{tabular}{l|ccccc}

Scenario & Wavelet & PCA & tPCA & SPCA & GSPPCA\tabularnewline
\hline 
A & 9.516$\pm$0.819 & 2.719$\pm$0.439 & 2.484$\pm$0.372 & 2.480$\pm$0.371 & \textbf{2.283$\pm$0.344}\tabularnewline
\hline 
B & 8.156$\pm$0.725 & 1.390$\pm$0.351 & 1.253$\pm$0.343 & 1.406$\pm$0.354 & \textbf{1.193$\pm$0.337}\tabularnewline

\end{tabular}
\par\end{centering}

\protect\caption{\label{wav-4}Reconstruction error (sum of squared errors) for wavelet
signal denoising on the two simulation scenarios (results are averaged
on 50 signal reconstructions). Standard deviations are also provided.}
\end{table}

%%%%%%%%%%%%%%%

\section{Application to unsupervised gene selection}

Considering again the breast cancer data set previously studied in Section 3, we address here the issue of the biological significance of the selected genes. To this end, we will use the \emph{pathway enrichment index} (PEI) introduced by \cite{teschendorff2007} and used in a sparse PCA framework by \citep{journee2010}.

\subsection{Pathway enrichment as a measure of biological significance}

In this subsection, we briefly review how the PEI can be computed in order to evaluate the quality of a given subset of genes. For more details on the PEI, see \cite{teschendorff2007} or \cite{journee2009}, and on hypergeometric tests and enrichment, see \cite{rivals2007}.

Suppose that using a microarray data matrix $\mathbf{X} \in \mathbb{R}^{n \times p}$ where each variable corresponds to a gene, an algorithm infers a subset $\mathbf{s} \subset \{1,...,p\}$ of genes. A way to assess its biological significance is to compare $\mathbf{s}$ to many other subsets \emph{which are known to be biologically relevant}. In this case, the biologically relevant subsets are defined by \emph{biological pathways}, and are therefore groups of genes involved in series of biochemical reactions linked to a certain biological function. Let us denote these known subsets  $\mathbf{b}_1,...,\mathbf{b}_N \subset \{1,...,p\}$. For our breast cancer experiment, we use the $N=1116$ pathways from the Reactome database \citep{fabregat2016} included in the R package \texttt{reactomePA} \citep{yu2016}. For $k\leq N$, the \emph{enrichment} of $\mathbf{s}$ in the $k$-th pathway of this list is the statistical significance of its overlap with $\mathbf{b}_k$, evaluated using the \emph{hypergeometric test}. More specifically, for each $k\leq N$, the null hypothesis of this test is that the genes in $\mathbf{s}$ are chosen uniformly at random from the total gene population. Under this hypothesis, the test statistic ${\# (\mathbf{s} \cap \mathbf{b}_k)}$ follows a hypergeometric distribution and a $p$-value can be computed to assess the statistical significance of the overlap. Because we are conducting one test for each pathway considered, these $p$-value are then adjusted using the Benjamini-Hochberg procedure to control the false discovery rate \citep{benjamini1995}. The subset $\mathbf{s}$ is eventually declared enriched for a certain pathway if the adjusted $p$-value of the corresponding hypergeometric test is lower than $0.01$. The PEI is finally defined as the percentage of enriched pathways in the Reactome family.

\subsection{Results}

We compare in Table \ref{PEI} the PEI obtained by GSPPCA with $d=10$, SPCA and thresholded PCA for several fixed cardinalities. Similarly to \cite{zou2006}, the two local methods are computing a single sparse axis.
As in \cite{journee2010} SPCA appears to give slightly better results than thresholded PCA.
GSPPCA significantly outperforms the two other methods.  This means that the genes selected by GSPPCA are consistently more associated with the Reactome pathways, and are therefore more interpretable. This highlights the fact that projecting the data onto a globally sparse subspace of dimension higher than one leads to significantly more interpretable and biologically plausible results.
\begin{table}[t]
\caption{PEI for several fixed cardinalities} \label{PEI}
\centering
\begin{tabular}{llccc}

  \multicolumn{2}{l}{Cardinality}     & tPCA & SPCA & GSPPCA        \\ \hline
290 &\emph{selected by tPCA}  & 0.09    & 0.09 & \textbf{3.22} \\ \hline
1000 & & 1.88 & 1.88 & \textbf{4.57} \\ \hline
1965 & \emph{selected by GSPPCA} & 1.7 & 1.61 & \textbf{5.19} \\ \hline
3000 & & 1.16 & 1.43 & \textbf{3.58} \\ \hline
4466 & \emph{selected by SPCA} & 3.04 & 3.22 & \textbf{4.29} \\ \hline
5000 & \emph{selected by SPCA} & 1.79 & 1.88 & \textbf{2.42} \\
\end{tabular}
\end{table}
Regarding the estimation of the sparsity level, choosing the one that explains $99\%$ of the variance led SPCA to selecting $4810$ genes, which is difficult to interpret. For thresholded PCA, we selected the sparsity level using a criterion proposed by \cite{teschendorff2007}. Even though it led to the sparsest solution, {its PEI was very small. Regarding GSPPCA, the noiseless marginal log-likelihood and the PEI of the corresponding models are plotted on Figure~\ref{margenes}. We can see that the marginal likelihood peak corresponds to  highly interpretable genes: more than $5\%$ of the biological pathways in the Reactome family have a significant overlap with the genes selected by GSPPCA.
Furthermore, models with a lower marginal likelihood have generally a lower PEI. To a certain extend, this shows that our marginal likelihood expression can stand as an indicator of biological significance.}

\begin{figure}[tb]
   \caption{Marginal likelihood for the gene selection problem} \label{margenes}
   \includegraphics[width=0.9\columnwidth]{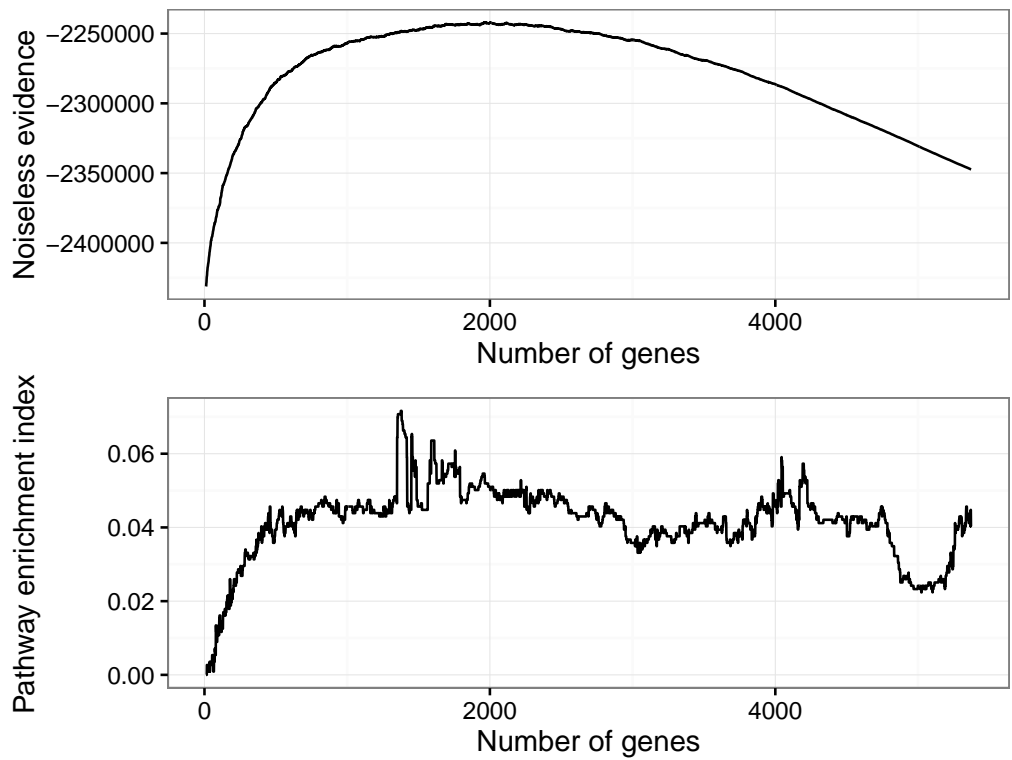}
\end{figure}

%%%%%%%%%%%%%

\section{Conclusion}

Unsupervised feature selection is an hazy and exciting problem. It becomes particularly difficult and ill-posed when no specific learning task (such as clustering) is driving it. We have proposed in this paper a new method for unsupervised feature selection based on the idea that the data may lie close to a subspace of moderate dimension spanned by a basis with a shared sparsity pattern. On several real data sets, this approach outperforms a popular method which consists in finding the sparsity pattern of the single leading principal vector of the data. These results suggest that, on many real-life high-dimensional data sets, an important part of the information cannot be captured by one-dimensional subspace approximations.

While building our framework, we derived the first closed-form expression of the marginal likelihood of a Bayesian PCA model, using the noiseless model of \cite{roweis1998}. Regarding future work, it would be interesting to see if more complex priors can be used and to what extend our expression can lead to a simultaneous estimation of the sparsity level and the dimension of the latent space. Indeed, intrinsic dimension estimation, which was beyond the scope of this paper, has an enduring relationship with probabilistic versions of PCA~\citep{minka2000,bouveyron2011,nakajima15} and would be an interesting direction.

\section*{Acknowledgements}
The authors would like to thank Magnus Ulfarsson for providing svnPCA software and Florentin Damiens for helpful discussions on Bessel functions. Part of this work was done while PAM was visiting University College Dublin, funded by the Fondation Sciences Math\'ematiques de Paris (FSMP).
\newpage
\appendix
\section*{Appendix A. Proof of Theorem 1}

% Note: in this sample, the section number is hard-coded in. Following
% proper LaTeX conventions, it should properly be coded as a reference:

%In this appendix we prove the following theorem from
%Section~\ref{sec:textree-generalization}:

\begin{proof}
Let us first consider the case where all variables are active and assume that $\mathbf{v}=(1,1,...,1)$. Therefore, $\mathbf{V}=\mathbf{I}_p$ and the considered model reduces to probabilistic PCA. In this framework, we will derive the density of $\mathbf{x}$ by computing the Fourier transform of its characteristic function.

In order to compute the characteristic function of $\mathbf{x}$, we first decompose the latent vector $\mathbf{y}$ in the canonical base $$\mathbf{y}=y_1\mathbf{e_1}+...+y_d\mathbf{e_d},$$ where $(\mathbf{e_i})_{i\geq d}$ is the canonical base of $\mathbb{R}^d$. We can now write the vector $\mathbf{W} \mathbf{y}$ as a sum of of $d$ i.i.d variables $$\mathbf{W} \mathbf{y} = y_1\mathbf{W}\mathbf{e_1} +...+y_d\mathbf{W} \mathbf{e_d}.$$ Its characteristic function will consequently be
$$\varphi_{\mathbf{W} \mathbf{y}}=(\varphi_{y_1\mathbf{W}\mathbf{e_1}})^d.$$ Now, for all $\mathbf{u}\in \mathbb{R}^d$, we have  
\begin{align}\varphi_{y_1\mathbf{W}\mathbf{e_1}}(\mathbf{u}) &= \mathbb{E}[\exp( i y_1\mathbf{e_1}^T\mathbf{W}^T\mathbf{u})] \\ &=\mathbb{E}\left[\exp\left( i y_1 \sum_{k=1}^p w_{k1}u_k\right)\right], \end{align}
but, since $w_{st}\sim \mathcal{N}(0,\alpha)$ for all $s,t$, we will have $$\frac{1}{\sqrt{\alpha}||\mathbf{u}||_2}\sum_{k=1}^p w_{k1}u_k \sim \mathcal{N}(0,1),$$
thus, since $\mathbf{y}$ and $\mathbf{W}$ are independent, the law of $(\sqrt{\alpha}||\mathbf{u}||_2)^{-1}y_1 \sum_{k=1}^p w_{k1}u_k$ will be the one of a product of two standard Gaussian random variables, whose density is $1/\pi K_{0}(| . |)$ \citep{Wishart}. Therefore, we find that
\begin{align*}
\varphi_{y_1\mathbf{W}\mathbf{e_1}}(\mathbf{u}) &= \frac{1}{\pi}\int_{-\infty}^{+\infty} K_0(|t|)e^{i \sqrt{\alpha}||\mathbf{u}||_2 t}dt \\ &= \frac{2}{\pi} \int_{0}^{+\infty} K_0(t) \cos(\sqrt{\alpha}||\mathbf{u}||_2 t)dt,\end{align*}
is simply the cosine Fourier transform of a univariate Bessel function. Using a formula in \citet[p. 486]{AS}, we eventually find that $$\varphi_{y_1\mathbf{W}}(\mathbf{u}) = \frac{1}{\sqrt{1+\alpha ||\mathbf{u}||_2^2}},$$ which leads to $$\varphi_{\mathbf{W} \mathbf{y}} (\mathbf{u})=  \frac{1}{(1+\alpha ||\mathbf{u}||_2^2)^{d/2}}.$$
Finally, since the noise term and $\mathbf{W} \mathbf{y}$ are independent, the characteristic function of $\mathbf{x}$ will be $$\varphi_\mathbf{x}(\mathbf{u})=\varphi_{\mathbf{W} \mathbf{y}} (\mathbf{u})\varphi_{\boldsymbol{\varepsilon}}(\mathbf{u})=\frac{e^{-\sigma^2 ||\mathbf{u}||_2^2}}{(1+\alpha ||\mathbf{u}||_2^2)^{d/2}}.$$
The density of $\mathbf{x}$ is then given by the Fourier transform of its characteristic function
$$p(\mathbf{x})=\frac{1}{(2\pi)^p}\int_{\mathbb{R}^p}\varphi_\mathbf{x}(\mathbf{u})e^{i \mathbf{x}^T\mathbf{u}}d\mathbf{u},$$but, since $\varphi_\mathbf{x}(\mathbf{u})$ is a radial function (i.e a function that only depends on the norm of its argument), its Fourier transform can be expressed as a univariate integral \citep{operators} and we can write \begin{equation}p(\mathbf{x})=\frac{||\mathbf{x}||_2^{1-p/2}}{(2\pi)^{p/2}}\int_0^{+\infty}\frac{u^{p/2}e^{-\sigma^2 u^2}}{(1+\alpha u^2)^{d/2}}J_{p/2-1}(u||\mathbf{x}||_2)du, \label {toutactif} \end{equation}
which is the desired form for the case with no inactive variable.

In the general case, $\mathbf{v}$ is not necessarily equal to $(1,1,...,1)$ but we can notice that, since $\mathbf{x}_\mathbf{v}$ and $\mathbf{x}_{\mathbf{\bar v}}$ are independent, we can write $p(\mathbf{x})=p(\mathbf{x}_{\mathbf{\bar v}})p(\mathbf{x}_\mathbf{v})$. Applying \eqref{toutactif} to $\mathbf{x}_\mathbf{v}$ allows us to compute $p(\mathbf{x}_\mathbf{v})$ and to eventually obtain the expression of the density given by the theorem. \end{proof}

\section*{Appendix B. Proof of Theorem 2}

We begin by proving the following lemma, which links the distribution of the product between a Gaussian matrix and a Gaussian vector with the Bessel distribution. This result may be of independent interest.

\begin{lemma}
Let $\mathbf{A}$ be a $q \times d$ random matrix such that $a_{ij} \sim \mathcal{N}(0,s^2)$ with $s>0$ for all $i,j$ and let $\mathbf{b} \sim \mathcal{N}0,\mathbf{I}_d)$. Then $\mathbf{Ab}$ follows a Bessel distribution with parameters $s$ and $(d-q)/2$.
\end{lemma}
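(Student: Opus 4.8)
The plan is to realize $\mathbf{z}:=\mathbf{Ab}$ as a Gaussian scale mixture by conditioning on $\mathbf{b}$, and then to carry out the resulting one-dimensional integral, which produces a modified Bessel function of the second kind $K_\nu$.

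First I would fix $\mathbf{b}$ and observe that, writing $\mathbf{a}_1^T,\dots,\mathbf{a}_q^T$ for the rows of $\mathbf{A}$, each coordinate $z_i=\mathbf{a}_i^T\mathbf{b}=\sum_{j=1}^d a_{ij}b_j$ is a linear combination of the independent Gaussians $a_{i1},\dots,a_{id}$. Since distinct rows involve disjoint entries of $\mathbf{A}$, conditionally on $\mathbf{b}$ the $z_i$ are independent with $z_i\sim\mathcal{N}(0,s^2||\mathbf{b}||_2^2)$, so $\mathbf{z}\mid\mathbf{b}\sim\mathcal{N}(0,s^2||\mathbf{b}||_2^2\,\mathbf{I}_q)$. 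I want to stress that one must condition on $\mathbf{b}$ rather than on $\mathbf{A}$: this is precisely what guarantees that $\mathbf{z}$ is absolutely continuous on $\mathbb{R}^q$ (the conditional covariance is nonsingular as soon as $\mathbf{b}\neq0$, which happens almost surely), even in the regime $q>d$ where $\mathbf{Ab}$ would be degenerate for a fixed $\mathbf{A}$.

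Next, since $\mathbf{b}\sim\mathcal{N}(0,\mathbf{I}_d)$ the mixing variable $t:=||\mathbf{b}||_2^2$ follows a $\chi^2_d$ law, and marginalizing over $t$ gives
$$p(\mathbf{z})=\frac{(2\pi s^2)^{-q/2}}{2^{d/2}\Gamma(d/2)}\int_0^\infty t^{(d-q)/2-1}\,e^{-||\mathbf{z}||_2^2/(2s^2 t)-t/2}\,dt.$$
I would then invoke the classical integral representation $\int_0^\infty t^{\nu-1}e^{-a/t-bt}\,dt=2(a/b)^{\nu/2}K_\nu(2\sqrt{ab})$ with $\nu=(d-q)/2$, $a=||\mathbf{z}||_2^2/(2s^2)$ and $b=1/2$, which contributes the factor $2(||\mathbf{z}||_2/s)^{\nu}K_\nu(||\mathbf{z}||_2/s)$ because $2\sqrt{ab}=||\mathbf{z}||_2/s$.

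What remains is routine constant bookkeeping: collecting the powers of $2$, $\pi$ and $s$ and using the identity $\nu+q/2=d/2$ (so that $\Gamma(d/2)$ is exactly $\Gamma(\nu+k/2)$ with $k=q$) reproduces the density $\textup{Bessel}(\mathbf{z}\mid s,(d-q)/2)$ of the Definition. I do not expect a serious obstacle here: the only genuinely conceptual point is the decision to condition on $\mathbf{b}$, while the rest is an application of a standard Bessel integral. It is worth noting that $\nu=(d-q)/2>-q/2$ since $d>0$, so the target Bessel distribution is well defined. Incidentally, the same object could be reached through its characteristic function, which equals $(1+s^2||\mathbf{u}||_2^2)^{-d/2}$ by the computation already carried out in the proof of Theorem 1, but the scale-mixture route avoids having to invert a Hankel transform.
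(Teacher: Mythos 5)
Your proof is correct, and it takes a genuinely different route from the paper's. The paper reuses the characteristic-function decomposition from the proof of Theorem 1 to obtain $\varphi_{\mathbf{Ab}}(\mathbf{u})=(1+s^2||\mathbf{u}||_2^2)^{-d/2}$ (the paper's displayed exponent $||\mathbf{u}||_2^2/s$ is a typo, as your side remark implicitly corrects) and then concludes in one line by citing the characteristic-function characterization of the symmetric multivariate Bessel law from Fang et al.\ (Def.\ 2.5); no integral is computed, but the identification leans on an external fact about the Bessel family that the paper never states explicitly. You instead condition on $\mathbf{b}$ to get the Gaussian scale mixture $\mathbf{z}\mid\mathbf{b}\sim\mathcal{N}(0,s^2||\mathbf{b}||_2^2\,\mathbf{I}_q)$ with $\chi^2_d$ mixing, and evaluate the mixture integral via the classical representation $\int_0^\infty t^{\nu-1}e^{-a/t-bt}\,dt=2(a/b)^{\nu/2}K_\nu(2\sqrt{ab})$; I checked your bookkeeping and it is exact, since with $\nu=(d-q)/2$ one has $2^{1-q/2-d/2}=2^{-q-\nu+1}$ and $\Gamma(d/2)=\Gamma(\nu+q/2)$, so the constants reproduce $\textup{Bessel}(\mathbf{z}\mid s,(d-q)/2)$ precisely as given in Definition 2. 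What your route buys: it verifies the claim directly against the density the paper actually defines rather than against a characteristic-function characterization, it needs only one standard integral, and it exhibits the Bessel law as a $\chi^2_d$ variance mixture of isotropic Gaussians, which makes the absolute continuity of $\mathbf{Ab}$ transparent even in the degenerate-looking regime $q>d$. One small correction of emphasis: conditioning on $\mathbf{b}$ does not \emph{guarantee} absolute continuity --- the law of $\mathbf{Ab}$ is what it is regardless of how one computes it, and conditioning on $\mathbf{A}$ would give a mixture of singular Gaussians (for $q>d$) whose marginal is nonetheless the same absolutely continuous law; your conditioning choice is what makes the density \emph{manifest}, not what makes it exist.
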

\begin{proof}
Using the decomposition arguments from the proof of Theorem 1, the characteristic function of $\mathbf{Ab}$ is, for all $\mathbf{u} \in \mathbb{R}^k$,
$$\varphi_{\mathbf{Ab}}(\mathbf{u})=\frac{1}{(1+||\mathbf{u}||^2_2/s)^{d/2}},$$
which is exactly the characteristic function of the symmetric multivariate Bessel distribution \citet[Def. 2.5]{fang}.
\end{proof}
We can now prove Theorem 2.
\begin{proof}
Let us first consider the case where all variables are active and assume that $\mathbf{v}=(1,1,...,1)$. Using L\'evy's continuity theorem, $\boldsymbol{\varepsilon_2}$ weakly converges to zero when $\sigma_2$ vanishes. Since zero is a constant, this convergence also happens to be in probability \cite[p. 10]{van2000}. The variable $\mathbf{x}$ therefore converges in probability to $\mathbf{W}\mathbf{y}$, which follows a $\textup{Bessel}(1/\alpha,(d-q)/2)$ distribution according to our lemma.

In the general case when $\mathbf{v}$ is not necessarily equal to $(1,1,...,1)$ we can prove \eqref{noiseless} by invoking the independence between  $\mathbf{x}_\mathbf{v}$ and $\mathbf{x}_\mathbf{\bar{v}}$, similarly to the proof of Theorem 1. \end{proof}

\section*{Appendix C. Proof of Proposition 4}
\begin{proof}
Since a sum of concave functions is concave, it is sufficient to prove that the function
$g: \alpha \mapsto p(\tilde{\mathbf{x}}|\mathbf{v},\alpha,\sigma_1) $ is strictly concave.
Up to unnecessary additive constants, we have for all $\alpha >0$,
$$g(\alpha) = d \log \alpha \\ + \log \left((\alpha||\tilde{\mathbf{x}}_\mathbf{v}||_2)^{\frac{q-d}{2}} K_{\frac{q-d}{2}}\left(||\tilde{\mathbf{x}}_\mathbf{v}||_2\alpha\right) \right).$$
Using standard results about Bessel functions derivatives \citep[p. 376]{AS}, it can be shown that
$$g'(u) = \frac{d}{\alpha} - ||\tilde{\mathbf{x}}_\mathbf{v}||_2  h(u),$$ where the $h$ is the ratio $$h(\alpha)=\frac{K_{\frac{q-d}{2}-1}\left(||\tilde{\mathbf{x}}_\mathbf{v}||_2\alpha \right)}{K_{\frac{q-d}{2}}\left(||\tilde{\mathbf{x}}_\mathbf{v}||_2\alpha \right)}.$$ As proven independently by \cite{Lorch1967} and \cite{hartman1974}, since $q-d\geq 0$, $h$ is a increasing function on $\mathbb{R}^*_+$. Therefore $g'$ is stricly decreasing and $g$ is strictly concave.\end{proof}

\section*{Appendix D. Proof of Proposition 5}

\begin{proof}
\emph{Variational distribution of the latent vectors.} Using a standard result in variational mean-field approximations~\citep[chap. 10]{bishop2006}, we can write $$\ln q^*(\mathbf{y})=\mathbb{E}_{q(\mathbf{W})}[\ln p(\mathbf{X},\mathbf{Y},\mathbf{W}|\boldsymbol{\theta})]$$ which leads to the factorization
$q^*(\mathbf{y})=\prod_{i \leq n} q^*(\mathbf{y}_i).$
Then, for each $i\leq n$, we can write, up to unnecessary additive constants, $$\ln q^*(\mathbf{y}_i)=\mathbb{E}_{q(\mathbf{W})}[\ln p(\mathbf{x}_i,\mathbf{y}_i,\mathbf{W}|\boldsymbol{\theta})]=\mathbb{E}_{q(\mathbf{W})}\left[\frac{-1}{2\sigma^2}||\mathbf{x}_i-\mathbf{UWy}_i||_2^2\right]-\frac{1}{2}||\mathbf{y}_i||_2^2,$$ thus
$$\ln q^*(\mathbf{y}_i)= \frac{-1}{2\sigma^2} \mathbf{y_i}^T\mathbb{E}_{q(\mathbf{W})}[\mathbf{W}^T\mathbf{U}^2\mathbf{W}]\mathbf{y}_i+\frac{1}{\sigma^2}\mathbf{y}_i^T\mathbb{E}_{q(\mathbf{W})}[\mathbf{W}]^T\mathbf{U}\mathbf{x}_i-\frac{1}{2}||\mathbf{y}_i||_2^2,$$ which leads to the desired form.

\emph{Variational distribution of the loading matrix.} Similarly, up to unnecessary additive constants,
$$\ln q^*(\mathbf{W}) = \frac{-1}{2\sigma^2}\sum_{i=1}^n\mathbb{E}_{q(\mathbf{y}_i)}[||\mathbf{x}_i-\mathbf{UWy}_i||_2^2]-\frac{\alpha^2}{2}\sum_{i=1}^p||\mathbf{w}_i||_2^2,$$
$$\ln q^*(\mathbf{W}) =\sum_{i=1}^n\left( \frac{-1}{2\sigma^2}\sum_{j=1}^p u_j^2\mathbf{w}_j^T\mathbb{E}_{q(\mathbf{y}_i)}[\mathbf{y}_i\mathbf{y}_i^T]\mathbf{w}_j + \frac{1}{\sigma^2}\sum_{j=1}^px_{i,j} u_j \mathbf{w}_j^T\mathbb{E}_{q(\mathbf{y}_i)}[\mathbf{y}_i] \right)-\frac{\alpha^2}{2}\sum_{i=1}^p||\mathbf{w}_i||_2^2,$$
and
$$\ln q^*(\mathbf{W}) =\sum_{i=1}^p\left( \frac{-1}{2\sigma^2}\sum_{j=1}^p u_j^2\mathbf{w}_j^T\mathbb{E}_{q(\mathbf{y}_i)}[\mathbf{y}_i\mathbf{y}_i^T]\mathbf{w}_j + \frac{1}{\sigma^2}\sum_{j=1}^px_{i,j} u_j \mathbf{w}_j^T\mathbb{E}_{q(\mathbf{y}_i)}[\mathbf{y}_i] \right)-\frac{\alpha^2}{2}\sum_{i=1}^p||\mathbf{w}_i||_2^2,$$
which leads to the factorization $q^*(\mathbf{W})=\prod_{j \leq p} q^*(\mathbf{w}_i)$ and to the desired expression. \end{proof}

\section*{Appendix E. Proof of Proposition 6}

\begin{proof}By definition, we have
$$-\mathcal{F}_q(\mathbf{x_1},...\mathbf{x_n}|\boldsymbol{\theta})=\mathbb{E}_q[\ln p(\mathbf{X},\mathbf{Y},\mathbf{W}|\boldsymbol{\theta})]+H(q),$$ therefore \begin{multline*}-\mathcal{F}_q(\mathbf{x_1},...\mathbf{x_n}|\boldsymbol{\theta})=-np\ln \sigma - \frac{1}{2\sigma^2}\Tr (\mathbf{X}^T\mathbf{X})- \frac{1}{2\sigma^2}\sum_{i=1}^n\mathbb{E}_q[\mathbf{y}_i\mathbf{W}^T\mathbf{U}^2\mathbf{W}\mathbf{y}_i]+ \frac{1}{\sigma^2}\sum_{i=1}^n\mathbf{x}_i^T\mathbf{UM}\boldsymbol{\mu}_i \\  + \sum_{k=1}^p\left(d \ln \alpha -\frac{\alpha^2}{2}\mathbb{E}_q[\mathbf{w}_k^T\mathbf{w}_k] \right)- \frac{1}{2}\sum_{i=1}^n \mathbb{E}_q[\mathbf{y}_i^T\mathbf{y}_i] +\frac{n}{2}\ln |\mathbf{\Sigma}|+\frac{1}{2}\sum_{k=1}^p\ln |\mathbf{S}_k|, 
\end{multline*}
and computing the expectations leads to
\begin{multline}-\mathcal{F}_q(\mathbf{x_1},...\mathbf{x_n}|\boldsymbol{\theta})=-np\ln \sigma + dp \ln \alpha - \frac{1}{2\sigma^2}\Tr (\mathbf{X}^T\mathbf{X})- \frac{1}{2\sigma^2}\sum_{i=1}^n\sum_{k=1}^pu_k^2\Tr[(\boldsymbol{\Sigma}+\boldsymbol{\mu}_i\boldsymbol{\mu}_i^T)(\boldsymbol{S}_k+\mathbf{m}_k\mathbf{m}_k^T)]\\+ \frac{1}{\sigma^2}\sum_{i=1}^n\mathbf{x}_i^T\mathbf{UM}\boldsymbol{\mu}_i   + \sum_{k=1}^p -\frac{\alpha^2}{2}\Tr(\mathbf{S}_k+\mathbf{m}_k\mathbf{m}_k^T)-\frac{1}{2}\sum_{i=1}^n \Tr(\boldsymbol{\Sigma}+\boldsymbol{\mu}_i\boldsymbol{\mu}_i^T) +\frac{n}{2}\ln |\mathbf{\Sigma}|+\frac{1}{2}\sum_{k=1}^p\ln |\mathbf{S}_k|, 
\end{multline}
which allows us to conclude.
\end{proof}
\section*{Appendix F. Details about the breast cancer data set}

{The microarray data set used in this paper is included in the \texttt{breastCancerVDX} R package \citep{VDX} and contains the gene expression data published by \cite{wang2005} and \cite{minn2007}. It contains expression levels of $22283$ probes for 344 patients. In order to be able to provide an interpretation of feature selection, we reduced the data from probe-level to gene-level using the following procedure:\begin{itemize}\item first, the probes with no gene identifier were discarded
\item then, the data was aggregated to gene-level using the \texttt{collapseRows} R function of \cite{miller2011}, \item among the genes obtained, only the genes listed in the Reactome database \citep{fabregat2016} were kept in order to eventually perform pathway enrichment, \item finally, the data was centered but not standardized.\end{itemize} The resulting data matrix contains $5391$ variables (genes) and 344 observations (patients).}

\vskip 0.2in
\bibliography{biblio}

\end{document}